\documentclass{article}

\usepackage[dvipsnames]{xcolor}  %
\usepackage[preprint]{corl_2026} %

\usepackage{amsmath} %
\usepackage{amssymb}
\usepackage{url}            %
\usepackage{booktabs}       %
\usepackage{multirow}
\usepackage{amsfonts}       %
\usepackage{nicefrac}       %
\usepackage{microtype}      %

\usepackage{enumitem}       %
\usepackage{silence}
\WarningsOff[caption]
\usepackage{subcaption}
\usepackage{siunitx}

\usepackage[title]{appendix}

\usepackage[normalem]{ulem}
\useunder{\uline}{\ul}{}

\usepackage{mathtools}      %
\definecolor{myBlue}{HTML}{156082}
\definecolor{myOrange}{HTML}{C04F15}

\usepackage{graphicx}
\usepackage{wrapfig}       %
\usepackage{stackengine}
\newcommand{\morphop}{\mathrel{\scalebox{0.8}{\raisebox{-2pt}{\stackon[0.5pt]{$\triangleright$}{$\diamond$}}}}}
\usepackage{amsthm}
\usepackage{amsmath}
\usepackage{amssymb}   %
\usepackage{amsfonts}
\usepackage{mathtools}
\usepackage[export]{adjustbox}

\usepackage[percent]{overpic}
\usepackage{xcolor}
\usepackage{array}

\theoremstyle{plain}
\newtheorem{theorem}{Theorem}[section]

\theoremstyle{definition}

\theoremstyle{remark}

\makeatletter
\renewenvironment{proof}{%
  \par
  \noindent\hspace{2em}{\itshape Proof: }%
}{%
  \hspace*{\fill}~\qed\par
}

\newcommand{\labelcurrentequation}[1]{%
  \begingroup
  \protected@edef\@currentlabel{\theequation}%
  \label{#1}%
  \endgroup
}

\makeatother

\definecolor{AuroraTeal}{HTML}{0A9396} %
\colorlet{AuroraTealLight}{AuroraTeal!35}
\colorlet{AuroraTealDark}{AuroraTeal!70!black}

\newcommand{\graphG}{\mathcal{G}}
\newcommand{\graphV}{\mathcal{V}}
\newcommand{\graphE}{\mathcal{E}}
\newcommand{\groupG}{\mathbb{G}}

\newcommand{\groupK}{\mathbb{K}}
\newcommand{\groupC}{\mathbb{C}}

\newcommand{\morphOp}{\mathrel{\scalebox{0.8}{\raisebox{-2pt}{\stackon[0.5pt]{$\triangleright$}{$\diamond$}}}}}
\newcommand{\Glact}{\mathrel{\scalebox{0.8}{$\triangleright$}}}

\newcommand{\method}{\texttt{MS-PPO}}

\title{Beyond Topology: A Morphological Symmetry Graph Representation for Locomotion Policy Learning}

\author{%
\begin{minipage}{\textwidth}
\centering
\vspace*{0.5em}
{\bfseries
Sizhe Wei$^{1\footnotemark[1]}$,\;
Xulin Chen$^{2\footnotemark[1]}$,\;
Fengze Xie$^{3}$,\;
Garrett Ethan Katz$^{2}$,\;
Zhenyu Gan$^{2}$,\;
Lu Gan$^{1\footnotemark[2]}$%
}\\[0.5em]
{\normalfont\small
$^1$Georgia Institute of Technology\quad
$^2$Syracuse University\quad
$^3$California Institute of Technology%
}\\[0.3em]
{\normalfont\ttfamily\small
\{swei,\,lgan\}@gatech.edu\quad
\{xchen168,\,gkatz01,\,zgan02\}@syr.edu\quad
fxxie@caltech.edu\quad
}\\[0.3em]
{\small \href{https://msppo.github.io}{\textcolor{myOrange}{\texttt{msppo.github.io}}}}
\vspace*{0.3em}
\end{minipage}%
}

\begin{document}
\maketitle

\begin{center}
    \centering
    \vspace{-20pt}
    \captionsetup{type=figure}
        \includegraphics[width=0.99\linewidth]{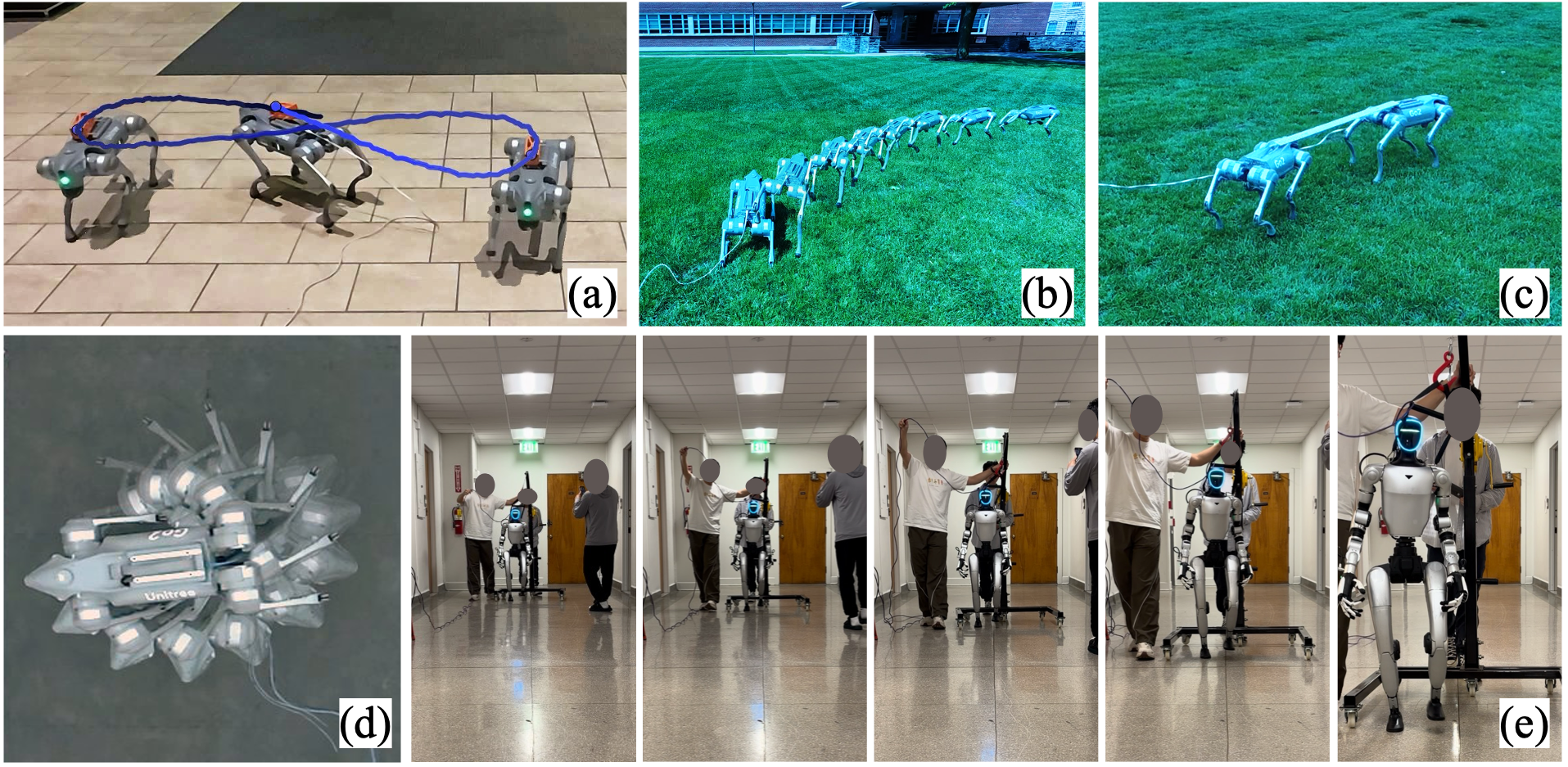}
        \captionof{figure}{
        We deploy \method{} on both the Unitree Go2 and Unitree G1 robots: \textbf{(a)} Figure-8 command tracking with real-robot trajectories (Sec.~\ref{sec:exp-walk-figure-8}); \textbf{(b)}-\textbf{(c)} Stable locomotion and collaboration under rear-right calf joint failure (Sec.~\ref{sec:exp-joing-failure}); \textbf{(d)} Symmetry generalization to out-of-distribution commands (Appendix~\ref{sec:appendix-additional-experiment}); \textbf{(e)} Zero-shot sim-to-real transfer to G1 humanoid locomotion (Sec.~\ref{sec:exp-humanoid}).}
        \vspace{-4mm}
    \label{fig:teaser}
\end{center}

\renewcommand{\thefootnote}{\fnsymbol{footnote}}
\footnotetext[1]{Equal contribution. \qquad \footnotemark[2]Corresponding author.}

\begin{abstract}
Reinforcement learning has enabled impressive locomotion skills on articulated robots, but common policy representations remain only weakly aligned with robot physics. Generic networks ignore kinematic structure, while graph-based policies encode connectivity without specifying how physical quantities transform across symmetric body parts. We introduce a morphological symmetry graph representation for locomotion policy learning and instantiate it in \method{}. Starting from the robot's topological graph, our representation augments each observation and action space with the permutation and sign transformations induced by morphological symmetry. This yields a symmetry-equivariant graph actor and a symmetry-invariant graph critic, enforcing the desired policy and value constraints by construction rather than through reward shaping or data augmentation. We evaluate \method{} on a variety of locomotion tasks using both Unitree Go2 quadruped and Unitree G1 humanoid, including command tracking, asymmetric joint failures, out-of-distribution command generalization, and zero-shot sim-to-real deployment. Experiments show improved symmetry generalization, robustness, sample efficiency, and model efficiency over topology- and symmetry-aware baselines.

\end{abstract}

\keywords{Morphological Symmetry, Graph Representation, Robot Locomotion}

\section{Introduction}
Reinforcement learning (RL)~\citep{sutton2018rl} has recently delivered strong performance in rigid-body, multi-link systems such as humanoids~\citep{ze2025twist, radosavovic2024real}, quadrupeds~\citep{miki2022learning}, and dexterous hands~\citep{qi2023hand}. These successes have been driven in large part by high-fidelity simulators and large-scale parallel training~\citep{pmlr-v164-rudin22a}, but the policy representation used in most systems remains relatively unstructured, with observations flattened into vectors and mapped to actions by generic neural networks. For articulated robots, this representation hides the physical organization of the system inside network weights. As a result, learned policies are difficult to interpret, hard to verify, and often brittle under changes in contact, morphology, actuation, or command distribution. A central question is therefore how to build a more interpretable, physics-aware, and morphologically aligned representation for policy learning.

The topology-informed graph is a natural starting point. It exposes the robot as a set of joints and rigid links, makes locality explicit, and gives policy networks a computational structure aligned with the articulated body. Graph-based policies exploit this structure to share parameters across repeated limbs and improve generalization across embodiments~\citep{wang2018nervenet,whitman2023learning,butterfield2024mihgnn, sferrazza2025body,gupta2022metamorph,patel2025get}. However, graph connectivity alone is not the full physical representation. While a graph captures the kinematic structure of the robot, it does not specify how the robot's morphological symmetries act on its state and action spaces. For example, it does not determine how coordinates should transform under reflection, rotation, or their combinations, nor how the state or action associated with one joint should be mapped to its symmetry-related counterpart. Consequently, a vanilla graph policy may model topology while still learning asymmetric or inconsistent behaviors between morphology-symmetric body pairs. This leaves a substantial part of the graph structure's capability unexplored.

Existing symmetry-aware RL methods encourage symmetric behavior through mirrored data augmentation~\citep{mittal2024symmetry}, reward shaping~\citep{ding2024symmetry}, or globally equivariant architectures~\citep{su2024symmloco,nie2025coordinated,li2025morphologically}. These approaches either impose symmetry only softly or apply a single group representation over the flat observation-action space, decoupled from the robot's kinematic topology. Thus, they do not jointly encode the two ingredients needed for a physically complete representation: topology-aware information flow and morphology-aware coordinate transformations.

In this work, we introduce a morphological symmetry graph representation for locomotion policy learning and instantiate it in \method{}. We represent an articulated robot as a graph and complete this graph with morphological symmetry actions on observations, privileged observations, and actions, including the permutations and sign changes induced by sagittal reflection~\citep{apraez2025morphological,xie2025_mshgnn}. Built on this representation, \method{} uses a morphological-symmetry-equivariant graph actor and a morphological-symmetry-invariant graph critic, enforcing the desired policy and value constraints by construction rather than through regularization or data augmentation. We evaluate \method{} on the Unitree Go2 quadruped and G1 humanoid robots, covering symmetric command tracking, asymmetric zero-torque joint failures, training-efficiency comparisons, and zero-shot sim-to-real deployment. \method{} consistently outperforms strong morphology- and symmetry-aware baselines, achieving the best joint-failure tolerance, the lowest sim-to-real velocity MSE on G1 with only 34.6\% of standard PPO's parameters. These results show that jointly encoding kinematic topology and morphological symmetry is an effective representation design for locomotion policy learning on articulated robots.

\section{Related Work}

\subsection{Topology-Informed Robot Policy}

Topology-informed policies encode the robot's kinematic structure in the policy architecture or training process, improving sample efficiency~\cite{sferrazza2025body}, control stability and adaptability~\cite{yu2023multi}, and cross-embodiment generalization~\cite{gupta2022metamorph}. A common solution represents the robot as a graph of joints and links, enabling message passing along kinematic connections and parameter sharing across structurally similar body parts. Such designs support \emph{modular} and \emph{compositional} policies that generalize across morphologies by recombining learned modules~\cite{huang2020one, whitman2023learning}. This view also connects naturally to Transformer-based policies, where self-attention can be interpreted as message passing over a fully connected graph~\cite{vaswani2017attention, bronstein2021geometricdeeplearninggrids}. Existing methods mainly differ in how robot structure is injected into the policy. Body Transformer~\cite{sferrazza2025body} masks attention using the adjacency matrix of the robot topology. MetaMorph~\cite{gupta2022metamorph} encodes topology through positional embeddings derived from URDF parameters such as joint type and link shape. EAT~\cite{yu2023multi} demonstrates robust real-world locomotion under online morphology changes, and related methods~\cite{hong2021structure, patel2025get} represent joints as tokens with structural biases from kinematic connectivity. Our work differs by moving beyond connectivity as the only architectural prior. While our policy uses a graph-based neural network to encode robot topology~\cite{butterfield2024mihgnn}, it further augments the graph with a morphology-consistent symmetry representation~\cite{xie2025_mshgnn}. The resulting graph provides a physically aligned representation space that captures both kinematic connectivity and embodiment symmetry.

\subsection{Symmetric Legged Locomotion}

Symmetry is central to biological locomotion, supporting balanced, efficient, and regular gaits. In learning-based frameworks, incorporating symmetry improves sample efficiency and learning speed~\cite{abdolhosseini2019learning}. Prior work has imposed symmetry via designing odd-even symmetry in hybrid dynamics~\cite{altendorfer2004stability, razavi2017symmetry}, data augmentation~\cite{mittal2024symmetry}, reward shaping~\cite{ding2024symmetry}, and symmetry-preserving neural architectures~\cite{su2024symmloco, nie2025coordinated}. While data augmentation and reward shaping encourage symmetric behavior softly and yield approximate equivariance, symmetry-perserving neural architectures impose stricter constraints but often require task-specific symmetry designs for states and actions. Existing methods therefore do not necessarily provide a morphology-consistent representation of how robot state and action coordinates transform under its embodiment symmetries. We address this gap by integrating symmetry representations directly into the topology-informed policy graph, aligning the policy with both kinematic connectivity and morphological symmetry structure.

\vspace{-2mm}
\section{Methodology}
\vspace{-2mm}

We propose a morphological symmetry graph representation for robot reinforcement learning. The goal is not to define a new RL optimizer, but to define a physically aligned function space in which policy and value functions respect both the robot's kinematic structure and its morphological symmetry. Figure~\ref{fig:framework} summarizes this design: robot observations are encoded onto a topology graph augmented with morphology-induced permutation and sign actions, processed by graph networks. In our experiments, we instantiate this representation within an actor-critic PPO implementation~\cite{schulman2017proximal}, denoted by \method{}, for comparison with standard locomotion baselines.

\begin{figure*}[t!]
    \centering
    \includegraphics[width=0.98\linewidth]{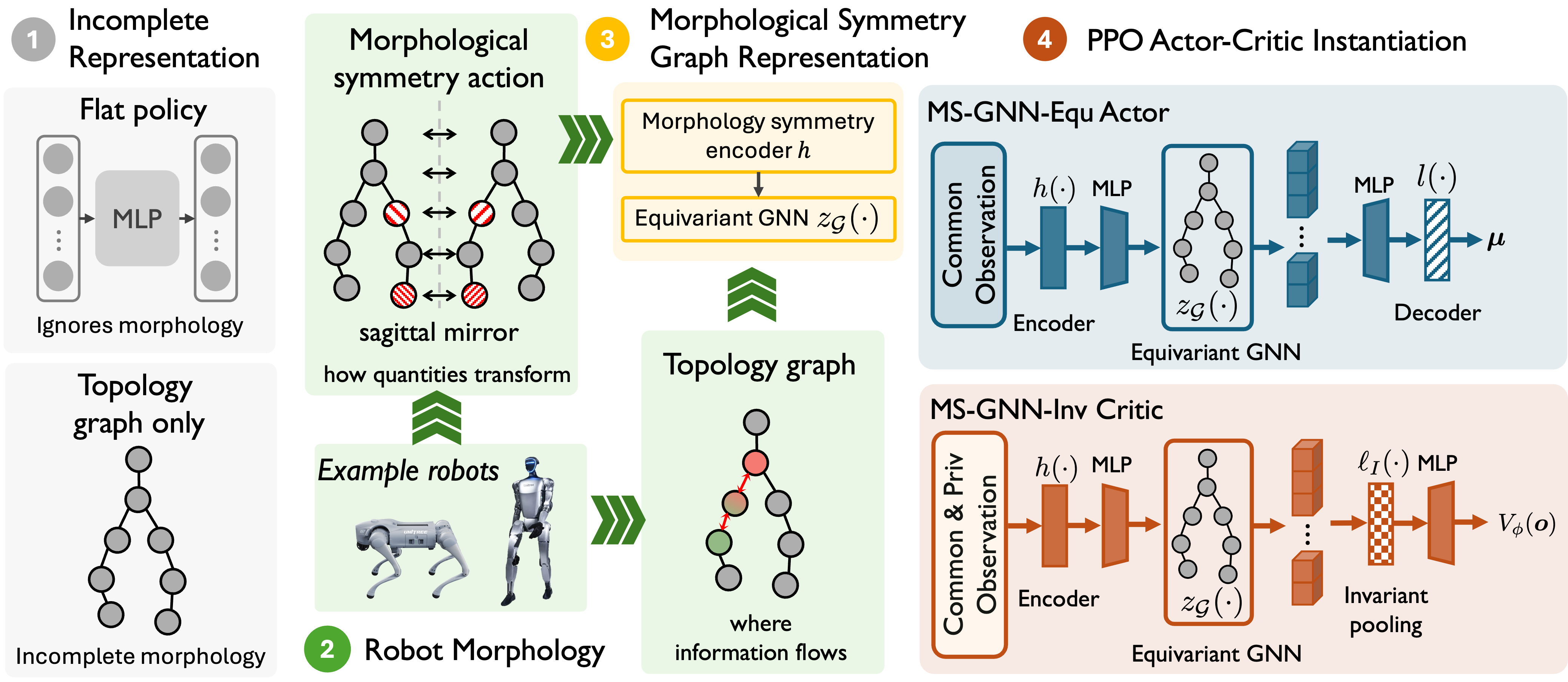}
    \caption{
    Overview of \method{}. We complete the robot's topology graph with morphological symmetry actions, including permutations and sign changes induced by sagittal reflection. The resulting representation specifies both \emph{where} information flows on articulated topology and \emph{how} physical quantities transform across symmetric body parts. The \textcolor{myBlue}{MS-GNN-Equ actor} maps mirrored observations to mirrored actions; the \textcolor{myOrange}{MS-GNN-Inv critic} maps morphologically symmetric states to the same value.}
    \label{fig:framework}
    \vspace{-6mm}
\end{figure*}

\subsection{Symmetry Notation and Design Objective}\label{sec:symmetric-mdp}
Let $\groupG$ be a finite morphological symmetry group. We write $g\morphOp x$ for the action of $g\in\groupG$ on physical robot signals such as observations, actions, joint states, or torques. This action includes both permutations of symmetric body parts and coordinate sign changes induced by reflection. For graph-indexed features $X$, we write $g\Glact X$ for the induced graph action, which reindexes node features and hidden embeddings according to the symmetry orbits of the robot graph.

A symmetry-aware actor and critic should satisfy, for all $g\in\groupG$,
\begin{equation}
    g\morphOp M_\theta(\boldsymbol{o}) = M_\theta(g\morphOp\boldsymbol{o}),
    \qquad
    V_\phi(g\morphOp\boldsymbol{o}) = V_\phi(\boldsymbol{o}).
    \label{eq:method-policy-equivariance}
\end{equation}
\labelcurrentequation{eq:method-value-invariance}
\hspace{-2mm}
The actor identity is policy equivariance, meaning mirrored observations produce mirrored actions. The critic identity is value invariance, meaning morphologically symmetric states have the same value. A topological graph alone specifies where information can flow, but not how physical coordinates transform. Therefore, we complete the graph with morphological symmetry actions.

We use the morphological-symmetry-equivariant GNN construction from prior work~\cite{xie2025_mshgnn} as a building block. Given a graph network $z_{\graphG}$ equivariant to graph reindexing,
\begin{equation}
    z_{\graphG}(g\Glact X)=g\Glact z_{\graphG}(X),
    \label{eq:graph-equivariance}
\end{equation}
an encoder $h$ and decoder $l$ connect physical and graph coordinates by satisfying
\begin{equation}
    h(g\morphOp x) = g\Glact h(x),
    \qquad
    g\morphOp l(X) = l(g\Glact X).
    \label{eq:h_def}
\end{equation}
\labelcurrentequation{eq:l_def}
\hspace{-1mm}Together with Eq.~\eqref{eq:graph-equivariance}, these encoder--decoder identities make $l\circ z_{\graphG}\circ h$ equivariant under the original morphological action. In this paper, we use this idea to build an equivariant graph actor and an invariant graph critic for locomotion policy learning.

\subsection{Robot-to-Graph Construction}
We construct a graph \mbox{$\graphG=(\graphV,\graphE)$} from the robot's kinematic tree. Nodes represent rigid-body components or grouped joints, and edges represent kinematic connectivity. The graph action $g\Glact(\cdot)$ reindexes graph nodes according to the symmetry orbits induced by the robot morphology, while the physical action $\morphOp$ handles coordinate sign changes. This distinction is essential, since the graph determines \emph{where} information is shared, while $\morphOp$ determines \emph{how} physical quantities transform.

At each time step $t$, the actor receives a history of common observations,
\begin{equation}
    \boldsymbol{o}_{\mathrm{comm},t}
    = \mathbb{F}\!\left(\{\mathbf{o}_k\}_{k=t-H+1}^{t}\right),
\end{equation}
where $H$ is the history length and $\mathbb{F}$ flattens temporal features. The mapping from robot signals to graph nodes is embodiment-specific but follows a common principle of attaching each physical quantity to the node representing the body part or joint complex that generates it, preserving kinematic edges for message passing, and defining symmetry orbits and sign actions from the robot morphology. Thus the same representation applies across different robots by changing only the graph instantiation, not the equivariant actor or invariant critic construction.

\subsection{Morphological Symmetry Encoder}
The encoder $h(\cdot)$ maps physical robot signals into graph features that transform only by graph reindexing under $\groupG$, as in Eq.~\eqref{eq:h_def}. For $\groupC_2$ sagittal reflection, we implement $h$ with node-wise sign masks, where base gravity and command features flip their lateral and yaw components, while joint position, velocity, and action features use the signs induced by mirrored joint frames. Other robots use the same recipe with node orbits and sign masks induced by their morphology.

After encoding, node features are projected to a shared hidden dimension with MLPs and processed by $L$ graph convolution layers with sum aggregation. The message-passing backbone $z_{\graphG}$ satisfies Eq.~\eqref{eq:graph-equivariance}. The decoder then converts graph features back to physical action or value coordinates.

\subsection{Equivariant Actor, Invariant Critic, and RL Instantiation}

\textbf{Equivariant actor.}
The actor uses the graph map
\begin{equation}
    M_\theta(\boldsymbol{o}) = l\!\left(z_{\graphG}(h(\boldsymbol{o}))\right).
\end{equation}
By the graph equivariance in Eq.~\eqref{eq:graph-equivariance} and the encoder--decoder identities in Eq.~\eqref{eq:h_def}, this actor satisfies the policy-equivariance condition in Eq.~\eqref{eq:method-policy-equivariance}. The decoder maps graph features back to the robot's physical action coordinates, so the same derivation applies to any embodiment once its graph, symmetry orbits, and coordinate actions are specified.

\textbf{Invariant critic.}
For actor-critic algorithms, the critic should assign the same value to morphologically symmetric observations. We therefore define
\begin{equation}
    V_\phi(\boldsymbol{o})
    = \ell_I\!\left(z_{\graphG}(h(\boldsymbol{o}))\right),
\end{equation}
where $\ell_I$ is invariant to graph reindexing. In our implementation, $\ell_I$ concatenates permutation-invariant statistics over symmetry orbits, including symmetric sum pooling \mbox{$z_{\graphG}(\cdot)+g\Glact z_{\graphG}(\cdot)$} and disparity terms \mbox{$|z_{\graphG}(\cdot)-g\Glact z_{\graphG}(\cdot)|$} and \mbox{$(z_{\graphG}(\cdot)-g\Glact z_{\graphG}(\cdot))^2$}. The resulting invariant embedding is passed through an MLP, giving the value-invariance condition in Eq.~\eqref{eq:method-value-invariance}.

\textbf{RL instantiation.} The representation above defines the actor and critic function classes; the RL optimization objective is interchangeable. Here we instantiate the representation with PPO because it is a standard baseline for legged locomotion. The actor parameterizes a diagonal Gaussian policy,
\begin{equation}
\pi_\theta(\mathbf{a}_t\mid\boldsymbol{o}_{\mathrm{comm},t})
=
\mathcal{N}\!\left(\boldsymbol{\mu}_t,\operatorname{diag}(\boldsymbol{\sigma}^2)\right),
\qquad
\boldsymbol{\mu}_t=M_\theta(\boldsymbol{o}_{\mathrm{comm},t}),
\end{equation}
where $\boldsymbol{\sigma}$ is a learnable state-independent scale vector. The critic estimates \mbox{$V_\phi([\boldsymbol{o}_{\mathrm{comm},t},\boldsymbol{o}_{\mathrm{priv},t}])$} from common and privileged observations during training. Other actor-critic or policy-gradient algorithms can use the same morphological symmetry graph representation by replacing the PPO loss while keeping the equivariant actor and invariant critic parameterizations unchanged.

\vspace{-2mm}
\section{Experiments}
\vspace{-2mm}

We evaluate \method{} along three axes: symmetric command tracking, robustness to asymmetric joint failures, and scalability across robot embodiments. Experiments use two platforms with distinct morphologies, the Unitree Go2 quadruped~\cite{unitreego2} and the Unitree G1 humanoid robot~\cite{unitreeg1}. Go2 policies are trained on NVIDIA RTX 5090 and L40S GPUs, G1 policies are trained on NVIDIA A100 GPUs.

The baselines are chosen to isolate the \method{} architecture. PPO uses standard MLP actor-critic networks~\cite{schulman2017proximal} and is agnostic to morphology and symmetry. PPO-EMLP~\cite{su2024symmloco} imposes hard $\mathbb{C}_2$ symmetry constraints in the state-action space by using an equivariant-MLP actor and an invariant-MLP critic~\cite{apraez2025morphological, cesa2022program}. PPO-Aug~\cite{mittal2024symmetry} encourages symmetric behavior through data augmentation. MI-PPO uses the same morphology-informed GNN backbone as \method{}, but no symmetry~\cite{butterfield2024mihgnn, wang2018nervenet}. Together, these comparisons separate generic policy capacity, hard state-space symmetry, soft symmetry regularization, graph-structured topology, and the proposed symmetry-aware graph representation.

\subsection{Quadrupedal Figure-8 Walking}\label{sec:exp-walk-figure-8}

For Go2 robot, the graph follows the robot's kinematic connectivity. Projected gravity and velocity commands are assigned to the base node, while joint positions, velocities, previous actions, and phase features are assigned to the corresponding joint nodes. The critic uses the same graph augmented with privileged information, such as friction and restitution coefficients, which are invariant under sagittal reflection. Appendix~\ref{sec:appendix-encoder-details} gives the Go2 implementation of the morphology symmetry encoder~$h$.

Our Go2 training environment builds on \textit{Walk-These-Ways}~\cite{margolis2022walktheseways} with exposure to asymmetric joint failures~\cite{bao2025toward}. With 30\% probability, one randomly selected joint among the 12 joints receives zero commanded torque. A single-joint fault reduces the robot’s effective $\mathbb{C}_2$ symmetry, making PPO-EMLP’s flat global equivariance structurally mismatched for asymmetric compensation. We therefore evaluate PPO-EMLP in a more favorable symmetry-preserving setting, where bilaterally paired joints fail simultaneously. Even in this relaxed setting, PPO-EMLP succeeds in only one of three seeds, suggesting that hard flat equivariance remains brittle under faulted locomotion.

We use the \textit{walk-figure-8} task to evaluate command-tracking accuracy and task-level symmetry. The policy must track a symmetric trajectory with alternating clockwise and counterclockwise turns, where angular velocity tracking controls left-right balance and linear velocity tracking recovers the intended path. Simulation results in Table~\ref{tab:numerical-results} show that \method{} achieves the best angular tracking error and reward, as well as the second-best linear tracking error while using the fewest parameters, consistent with the hardware results in Figure~\ref{exp-result:walk-figure-8}. This performance is consistent with the proposed representation: the graph shares information across connected body parts, while the morphological symmetry encoder and decoder couple mirrored observations and actions by construction. Consequently, left- and right-turn behaviors need not be learned independently through reward shaping or data augmentation. \method{} uses only 34.6\% of the parameters of PPO and PPO-Aug and 69.3\% of the parameters of PPO-EMLP, suggesting that jointly encoding kinematic connectivity and morphological symmetry improves model efficiency without sacrificing task-level performance.

\begin{table}[tb]
\centering
\caption{Go2 simulation results on \textit{walk-figure-8} and \textit{joint zero-torque}. Results are averaged over three random seeds. On \textit{walk-figure-8}, \method{} achieves the highest reward and competitive tracking errors. On \textit{joint zero-torque}, reward for each joint type is averaged on four legs. \method{} achieves the best reward under all hip- and calf-failures and remains competitive under thigh-failures.}
\label{tab:numerical-results}
\resizebox{\columnwidth}{!}{%
\begin{tabular}{@{}lccccccc@{}}
\toprule
\multirow{2}{*}{Model} & \multicolumn{3}{c}{\textit{walk-figure-8}}                                           & \multicolumn{3}{c}{\textit{joint zero-torque}}                                                 & \multirow{2}{*}{Num. Params.$\downarrow$} \\ \cmidrule(lr){2-7}
                       & Lin. TE $\downarrow$          & Ang. TE $\downarrow$          & Rew. $\uparrow$                & Rew. (Hip) $\uparrow$         & Rew. (Thigh) $\uparrow$        & Rew. (Calf) $\uparrow$        &                                           \\ \midrule
PPO                    & $0.049 \pm 0.014$        & $0.274 \pm 0.029$	& $11.349 \pm 0.846$          &  $5.884 \pm 2.310$                   &       $10.066 \pm 1.517$    & $1.755 \pm 1.537$               & 2,113,945                                 \\
PPO-EMLP               &     $0.672 \pm 0.482$	       &  $0.937 \pm 0.549$	  &    $3.553 \pm 5.020$       &   $0.048 \pm 0.064$        &    $2.878 \pm 4.069$     &             $0.187 \pm 0.408$             & 1,056,979                                 \\
PPO-Aug                & $\textbf{0.031} \pm 0.005$	& $0.246 \pm 0.013$ &  $11.947 \pm 0.973$ & $\underline{7.887} \pm 2.522$                     &   $\textbf{12.703} \pm 1.343$            & $\underline{2.947} \pm 2.499$           & 2,113,945                                   \\
MI-PPO                 &  $0.037 \pm 0.012$   & 	$\underline{0.237} \pm 0.029$  &  	$\underline{12.170} \pm 0.911$   &  $	5.033 \pm 2.741$                   &   $10.719 \pm 1.179$       & $2.072 \pm 1.816$                & 864,142                                   \\
\method{}              & $\underline{0.036} \pm 0.006$ & $\textbf{0.198} \pm 0.013$ & $\textbf{13.143} \pm 0.384$ &  $\textbf{8.361} \pm 2.217$                   &   $\underline{11.350} \pm 1.606$       & $\textbf{3.331} \pm 2.286$                & \textbf{732,046}                          \\ \bottomrule
\end{tabular}%
}
\vspace{-4mm}
\end{table}

\begin{figure}[tb!]
    \centering
    \includegraphics[width=1\linewidth]{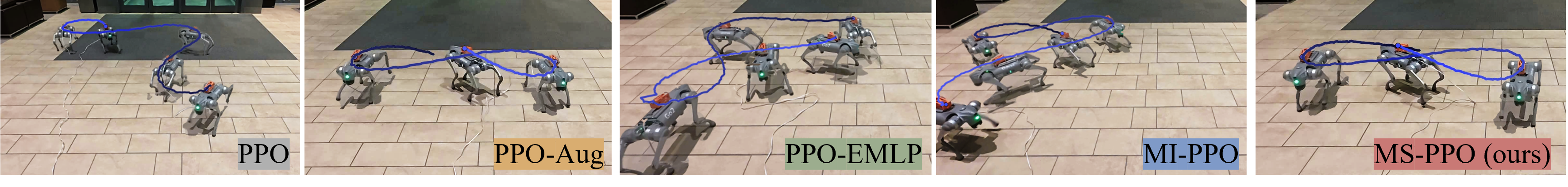}
    \vspace{-5mm}
    \captionof{figure}{
    Hardware results of \textit{walk-figure-8}. Blue curves show real-robot trajectories.
    }
    \label{exp-result:walk-figure-8}
    \vspace{-6mm}
\end{figure}

\subsection{Joint Failure Tolerance}\label{sec:exp-joing-failure}

\begin{figure}[tb]
    \centering
    \includegraphics[width=1\linewidth]{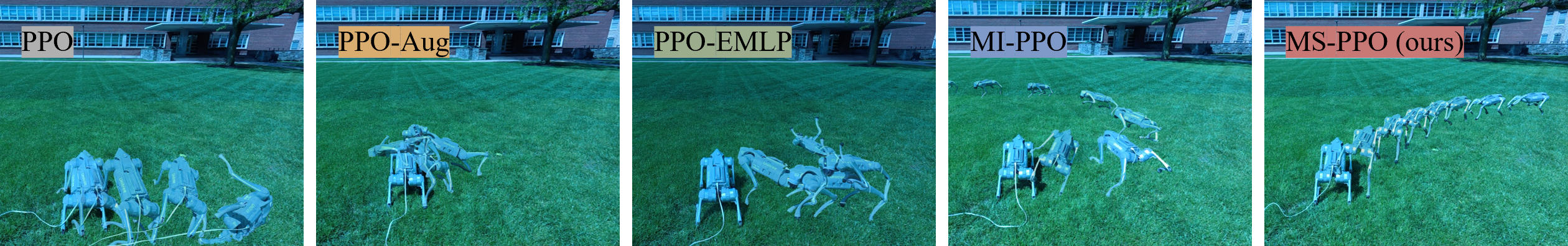}
    \caption{Hardware zero-torque tolerance on the Unitree Go2. Each panel shows the robot's forward-walking trajectory at 1\,m/s with the rear-right calf joint torque zeroed. MS-PPO maintains a stable path, while baselines deviate or fail to sustain locomotion.}
    \label{fig:zero-torque-real-tracking}
    \vspace{-3mm}
\end{figure}

The \textit{joint zero-torque} experiment evaluates robustness to localized actuator failures. The robot is commanded to walk forward at 1\,m/s while one specified joint receives zero torque, forcing the policy to compensate with the remaining joints and thereby breaking the nominal gait pattern. Figure~\ref{fig:zero-torque-bar-plot} shows all 12 single-joint failure cases, and Table~\ref{tab:numerical-results} aggregates the rewards by joint type. \method{} achieves the best average reward under hip- and calf-failures and remains competitive under thigh-failures. 

These results highlight the distinction between graph-structured morphology modeling and hard global symmetry constraints. For PPO-EMLP, asymmetric single-joint failures are difficult to accommodate because a flat $\mathbb{C}_2$-equivariant MLP couples the two reflected sides of the robot globally. We therefore use the constrained paired-failure training setting described above for PPO-EMLP, but the benchmark still evaluates single-joint failures. Its inferior performance in Table~\ref{tab:numerical-results} suggests that hard global equivariance is not sufficient when the disturbance is local and symmetry-breaking.

In contrast, \method{} keeps the failed actuator local to its joint node and lets the topological graph propagate its effect to mechanically related joints. The morphological symmetry maps share compensation across mirrored fault cases, rather than forcing identical left and right actions within one asymmetric rollout. The hardware result in Figure~\ref{fig:zero-torque-real-tracking} shows the same trend, where MI-PPO improves over the flat MLP baselines, but only \method{} maintains a stable forward trajectory under the rear-right calf failure. Figure~\ref{fig:zero-torque-collaboration} shows a dual-robot collaboration scenario in which the lead robot is affected by a joint failure. More baseline results are included in Appendix~\ref{sec:res-e}. This setup introduces an additional external disturbance since the rear robot can drag or perturb the motion of the faulted lead robot. Under this combined challenge of actuator failure and interaction disturbance, \method{} still produces stable locomotion, further demonstrating its robustness beyond isolated single-robot failures.

\begin{figure}[tb]
    \centering
    \vspace{-2mm}
    \newcommand{\subfigheight}{1.2cm}
    \includegraphics[width=0.85\linewidth]{figures/go2_zero_torque/v4/legend.png}
    \vspace{-1mm}\par
    \begin{subfigure}[t]{0.32\linewidth}
        \centering
        \includegraphics[width=\textwidth]{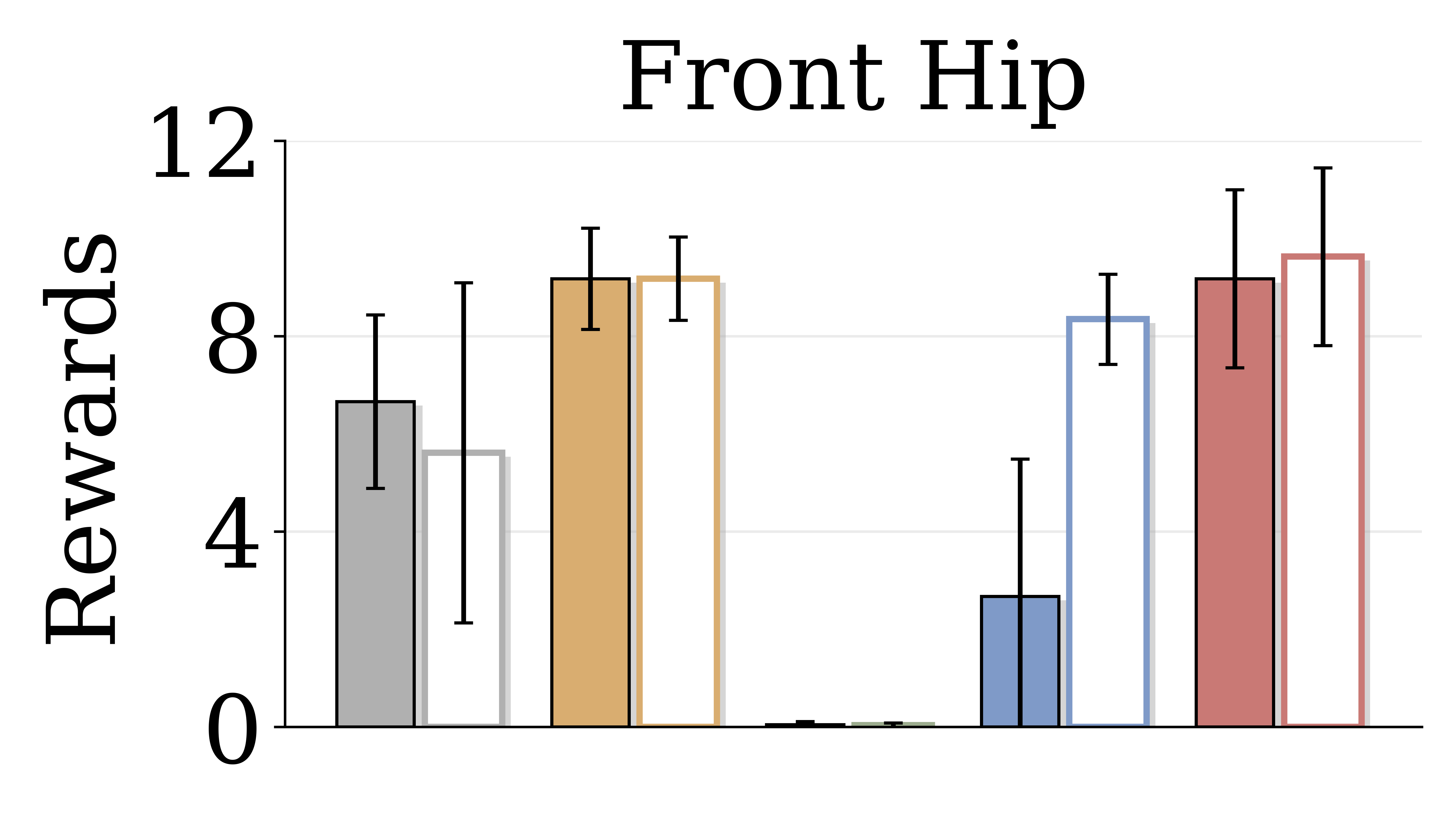}
    \end{subfigure}
    \hfill
    \begin{subfigure}[t]{0.32\linewidth}
        \centering
        \includegraphics[width=\textwidth]{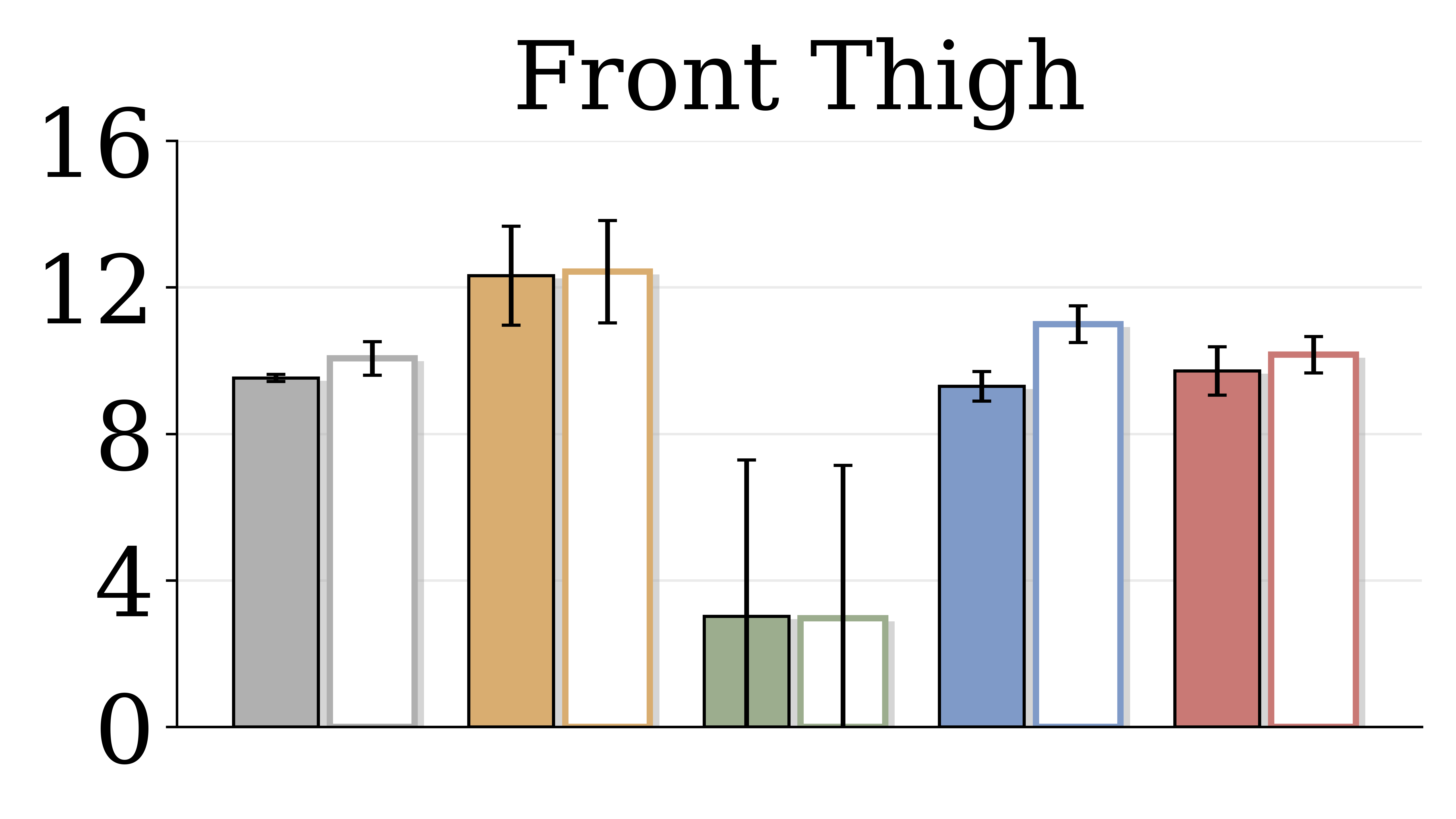}
    \end{subfigure}
    \hfill
    \begin{subfigure}[t]{0.32\linewidth}
        \centering
        \includegraphics[width=\textwidth]{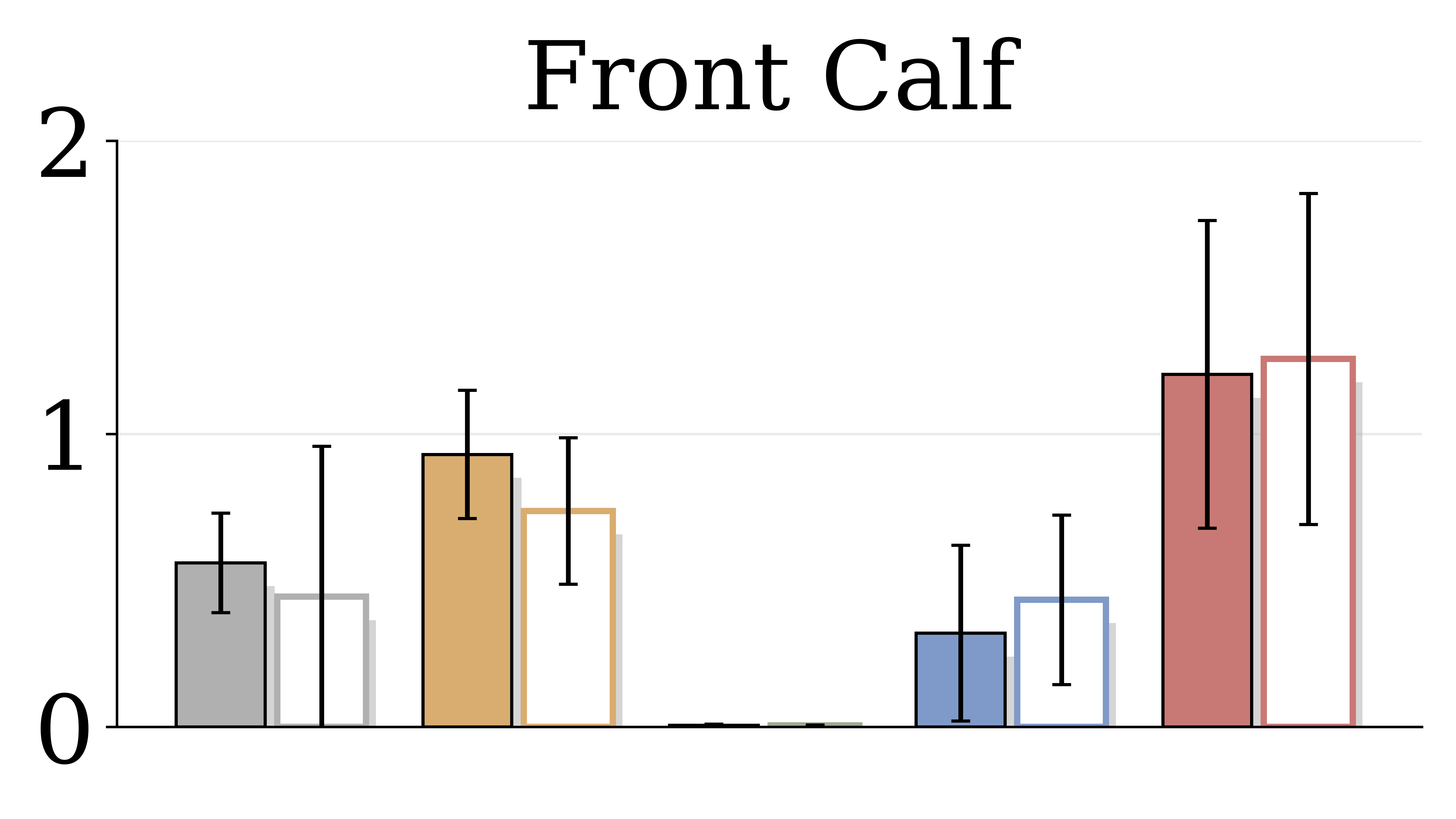}
    \end{subfigure} \\
    \vspace{-2mm}
    \begin{subfigure}[t]{0.32\linewidth}
        \centering
        \includegraphics[width=\textwidth]{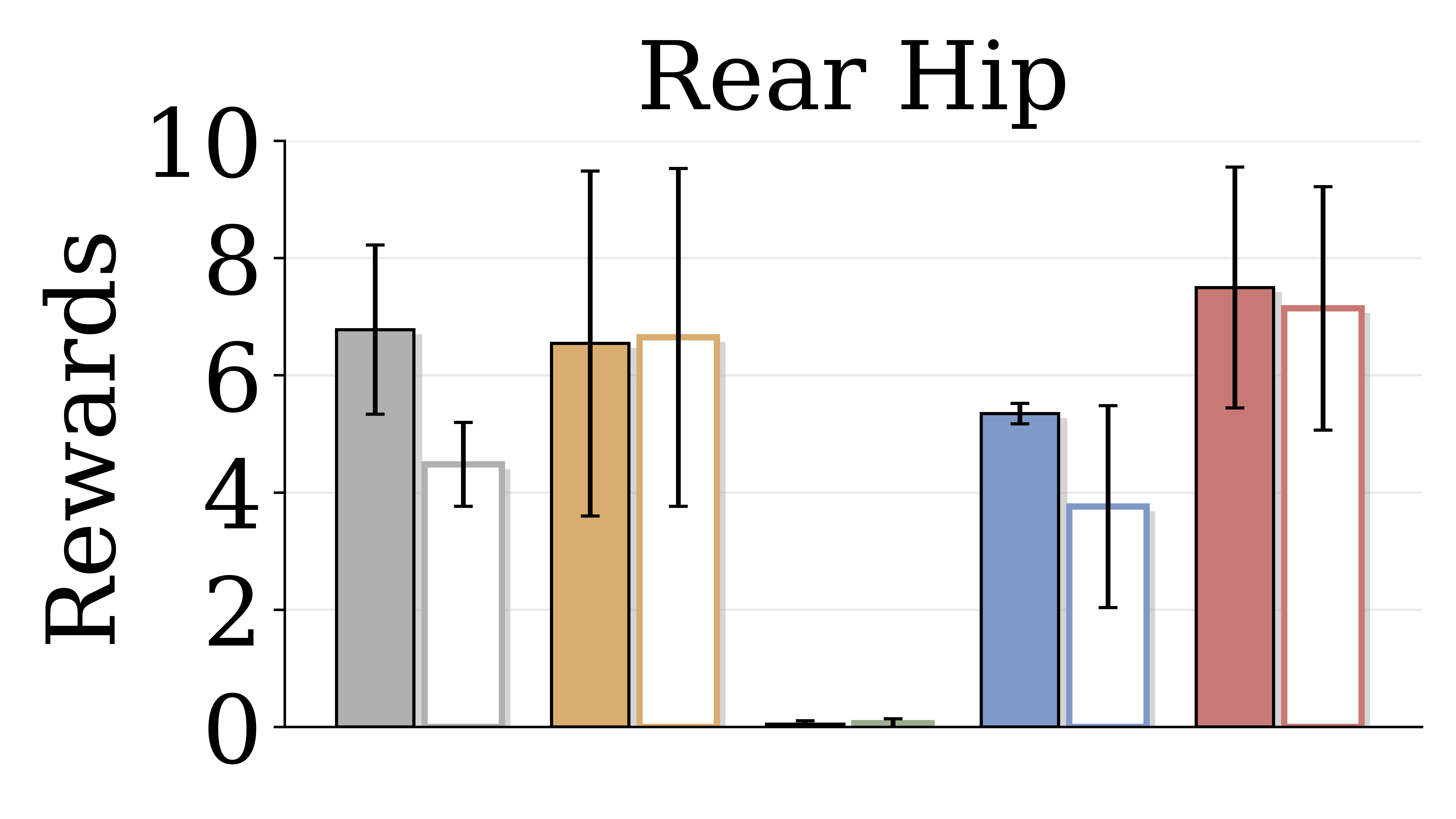}
    \end{subfigure}
    \hfill
    \begin{subfigure}[t]{0.32\linewidth}
        \centering
        \includegraphics[width=\textwidth]{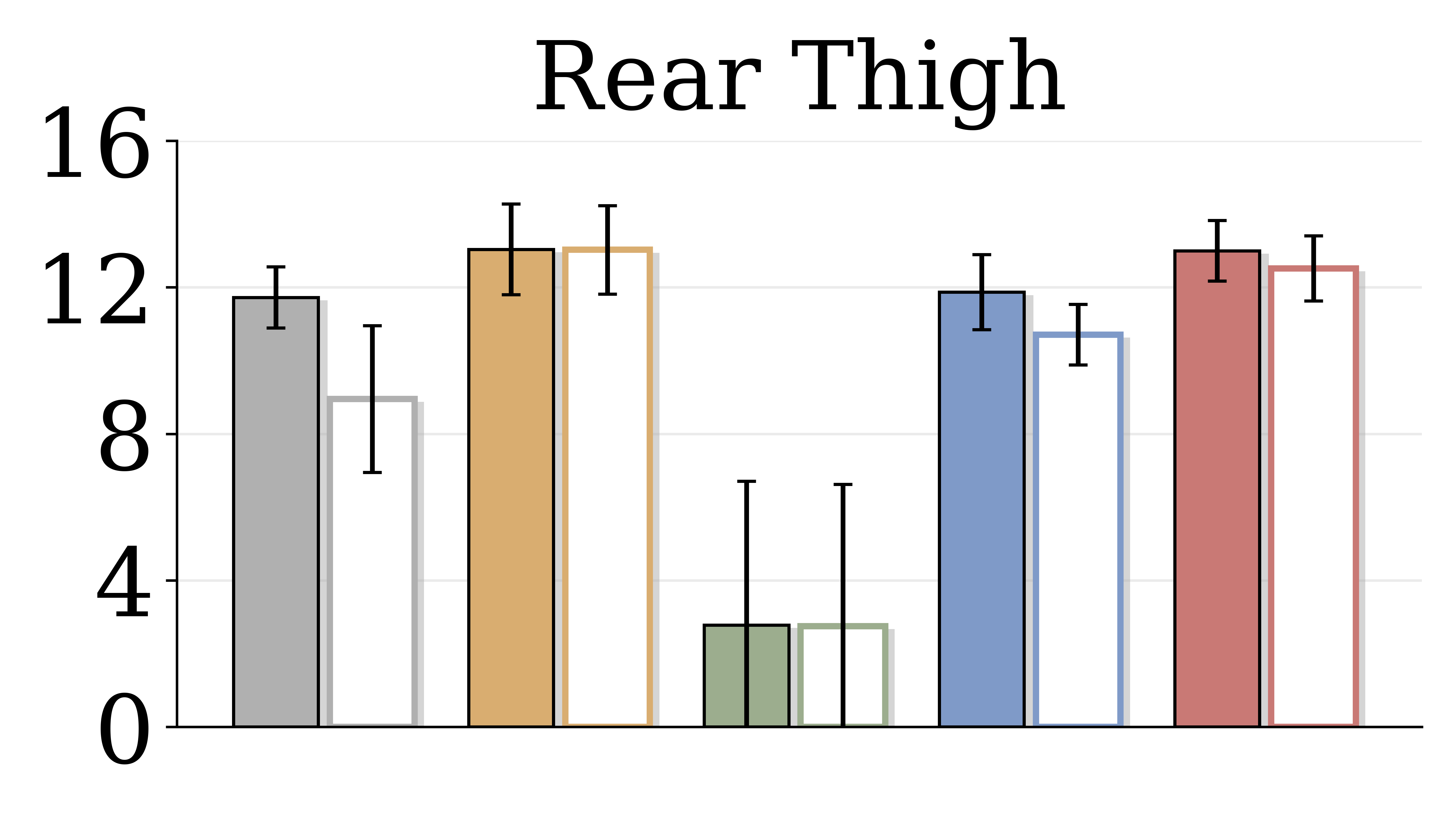}
    \end{subfigure}
    \hfill
    \begin{subfigure}[t]{0.32\linewidth}
        \centering
        \includegraphics[width=\textwidth]{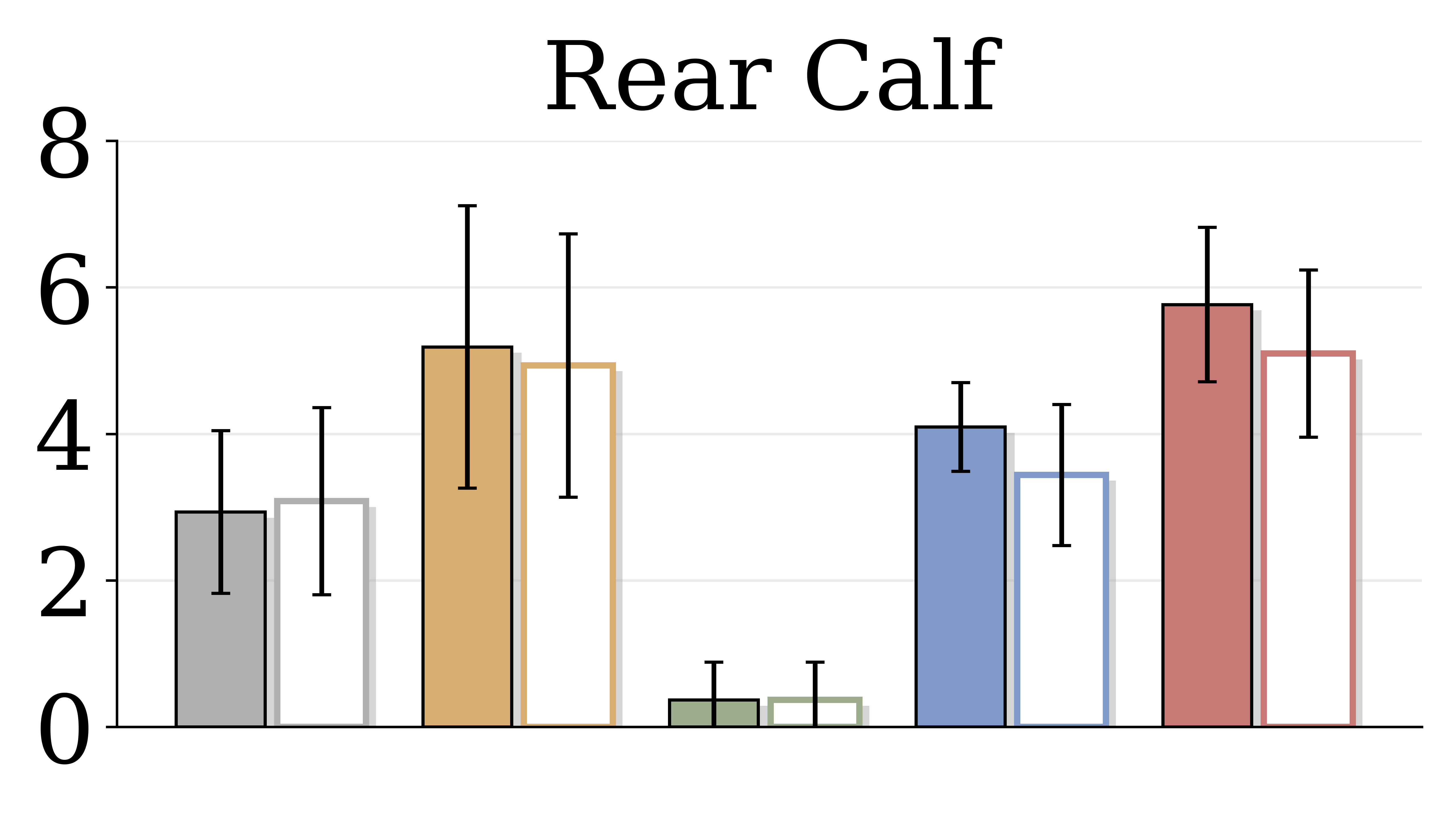}
    \end{subfigure}
    \vspace{-3mm}
    \caption{Go2 joint-failure tolerance in simulation. Each panel reports the total reward for one specified zero-torque joint during forward walking.}
    \label{fig:zero-torque-bar-plot}
    \vspace{-4mm}
\end{figure}

\vspace{-2mm}
\subsection{Humanoid Locomotion}\label{sec:exp-humanoid}
\vspace{-2mm}

\begin{figure}[tb]
    \centering
    \captionsetup{font=small, skip=3pt}

    \vspace{0mm}

    \begin{minipage}[t]{0.60\textwidth}
        \vspace{0pt}
        \centering
        \includegraphics[width=\linewidth]{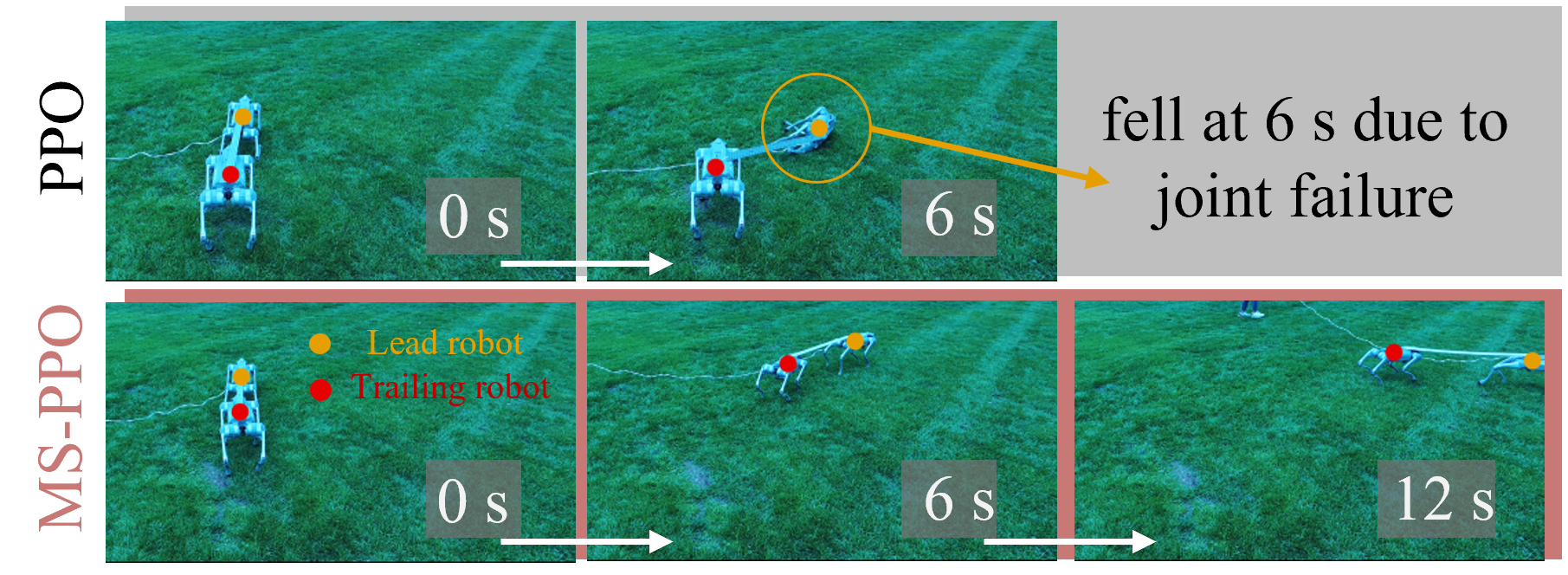}
        \caption{ \small
        Dual-robot collaboration under joint failure. The lead robot maintains 1\,m/s forward with its rear-right calf joint disabled. 
        }
        \label{fig:zero-torque-collaboration}
    \end{minipage}
    \hspace{0.015\textwidth}
    \begin{minipage}[t]{0.295\textwidth}
    \vspace{0pt}
    \centering
    \captionsetup{type=table, font=small} %
    \captionof{table}{Real-world Unitree G1 humanoid locomotion MSE results. A dash (-) indicates deployment failure.}
    \label{tab:g1-real-locomotion}

    \scriptsize
    \setlength{\tabcolsep}{4pt}
    \renewcommand{\arraystretch}{1.0}
    \begin{tabular}{@{}cc@{}}
        \toprule
                 & $\text{MSE}_x$ (m/s) $\downarrow$ \\ 
        \midrule
        PPO      & -- \\
        PPO-EMLP & 0.8111 $\pm$ 0.3253 \\
        PPO-Aug  & 1.6203 $\pm$ 0.2084 \\
        MI-PPO   & -- \\
        MS-PPO   & \textbf{0.5671} $\pm$ 0.1418 \\
        \bottomrule
    \end{tabular}
\end{minipage}

    \vspace{-6mm}
\end{figure}

We conduct experiments on Unitree G1 humanoid using \textit{Holosoma}~\cite{Amazon_FAR_and_Abbeel_Holosoma, Mittal_Isaac_Lab_-_2025} to evaluate whether the proposed representation extends beyond quadrupeds. This task is challenging as humanoid has significantly more degrees of freedom and a different kinematic topology than Go2. We focus on lower-body locomotion and keep the upper body, including the arm joints, fixed for this task.

The same graph construction applies to G1, but nodes need not correspond one-to-one with actuated DoFs. For each leg, we merge DoFs within the same physical joint complex into a hip node containing hip-roll, -pitch, and -yaw features, a knee node containing knee pitch features, and an ankle node containing ankle-pitch and -yaw features. Waist-related observations are assigned to the base node, and the left-right lower-body nodes define the $\groupC_2$ orbits used by $\Glact$. This construction tests whether morphological symmetry graphs are a general representation rather than a quadruped-specific design.

We deploy policies directly on the physical G1 without fine-tuning in Figure~\ref{fig:teaser}(e). As shown in Table~\ref{tab:g1-real-locomotion}, \method{} achieves the best velocity tracking among all deployable methods, while PPO and MI-PPO fail to produce stable hardware locomotion. The comparison is informative because MI-PPO uses a graph backbone but lacks the morphology-symmetry encoder and decoder. Its failure suggests that graph connectivity alone is not enough for reliable transfer on the humanoid. PPO-EMLP uses symmetry but lacks the topology, and tracks worse than \method{}. These results support our central claim that effective morphology-aware policy learning requires encoding symmetry structure consistently within the robot’s topological graph representation.

\subsection{Training Efficiency Across Tasks}
Figure~\ref{fig:training-efficiency} compares training reward across four representative tasks: Go2 trot, Go2 pronk, Go2 locomotion with joint failure, and G1 humanoid locomotion. The trot and pronk curves are included to show the optimization behavior on standard quadrupedal gaits. Detailed gait performance and out-of-distribution command-tracking results are reported in Appendix~\ref{sec:appendix-additional-experiment}. Across the tasks shown here, \method{} reaches high rewards with fewer environment steps and achieves stronger final performance than baselines. The advantage is most pronounced when either localized faults or a new morphology make purely flat symmetry constraints too restrictive, indicating that the proposed representation improves not only final robustness but also the efficiency of policy optimization.

\begin{figure}[tb]
    \centering
    \newcommand{\figheight}{3cm}
    \includegraphics[width=0.88\linewidth]{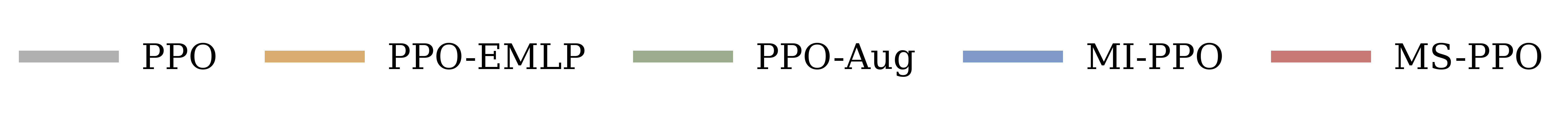}
    \vspace{-4mm}\par
    \includegraphics[height=\figheight]{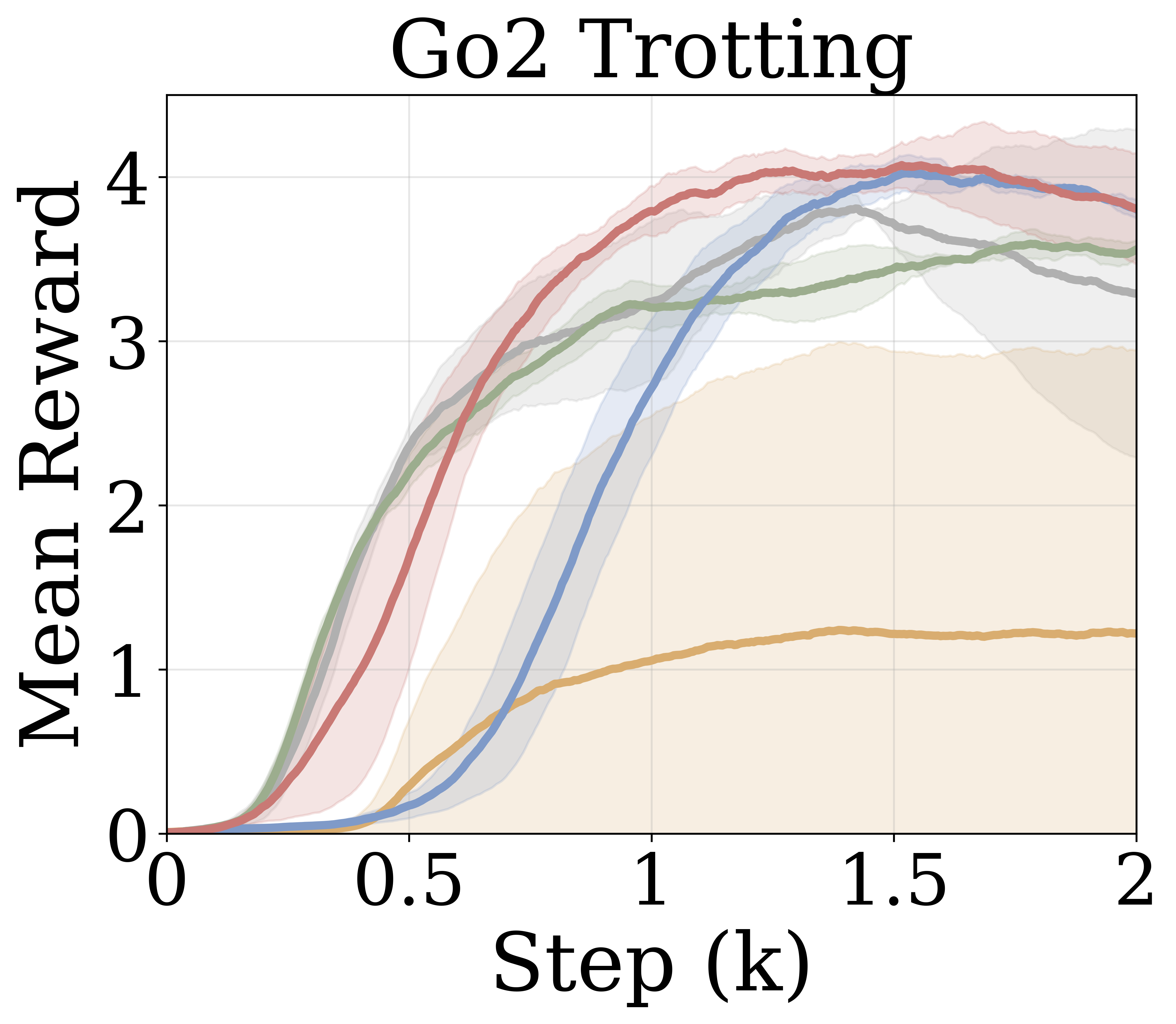}\hfill
    \includegraphics[height=\figheight]{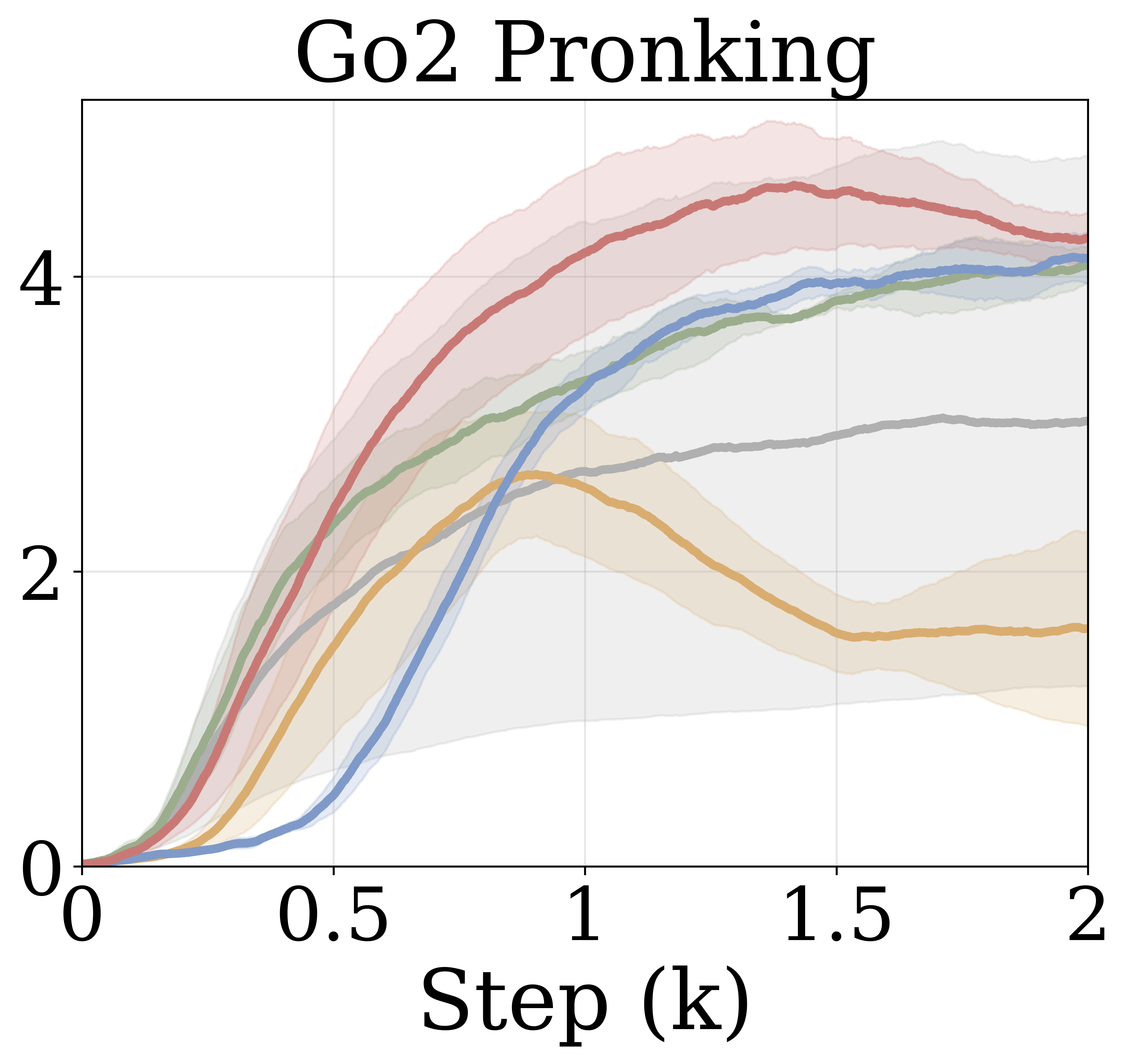}\hfill
    \includegraphics[height=\figheight]{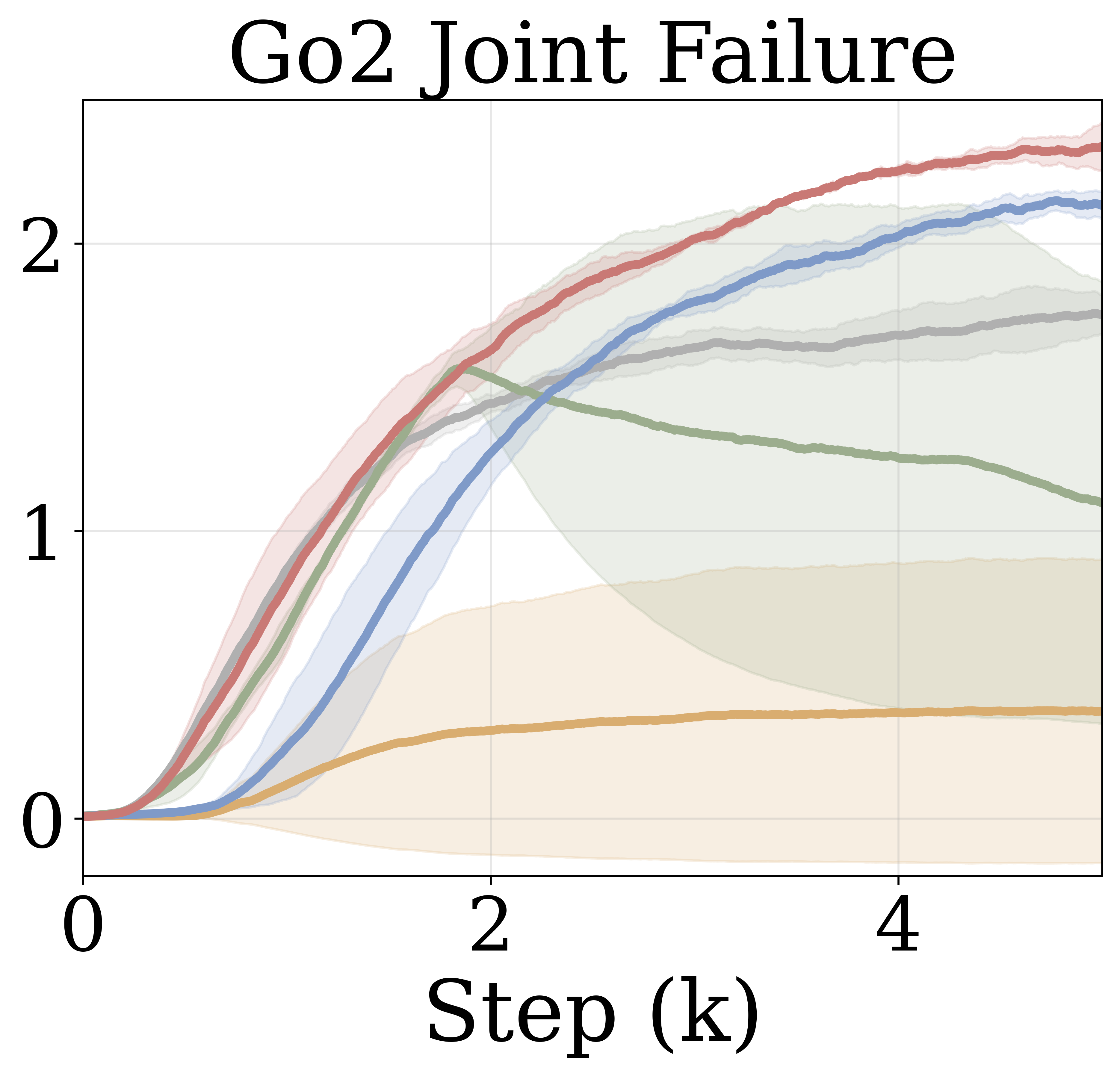}\hfill
    \includegraphics[height=\figheight]{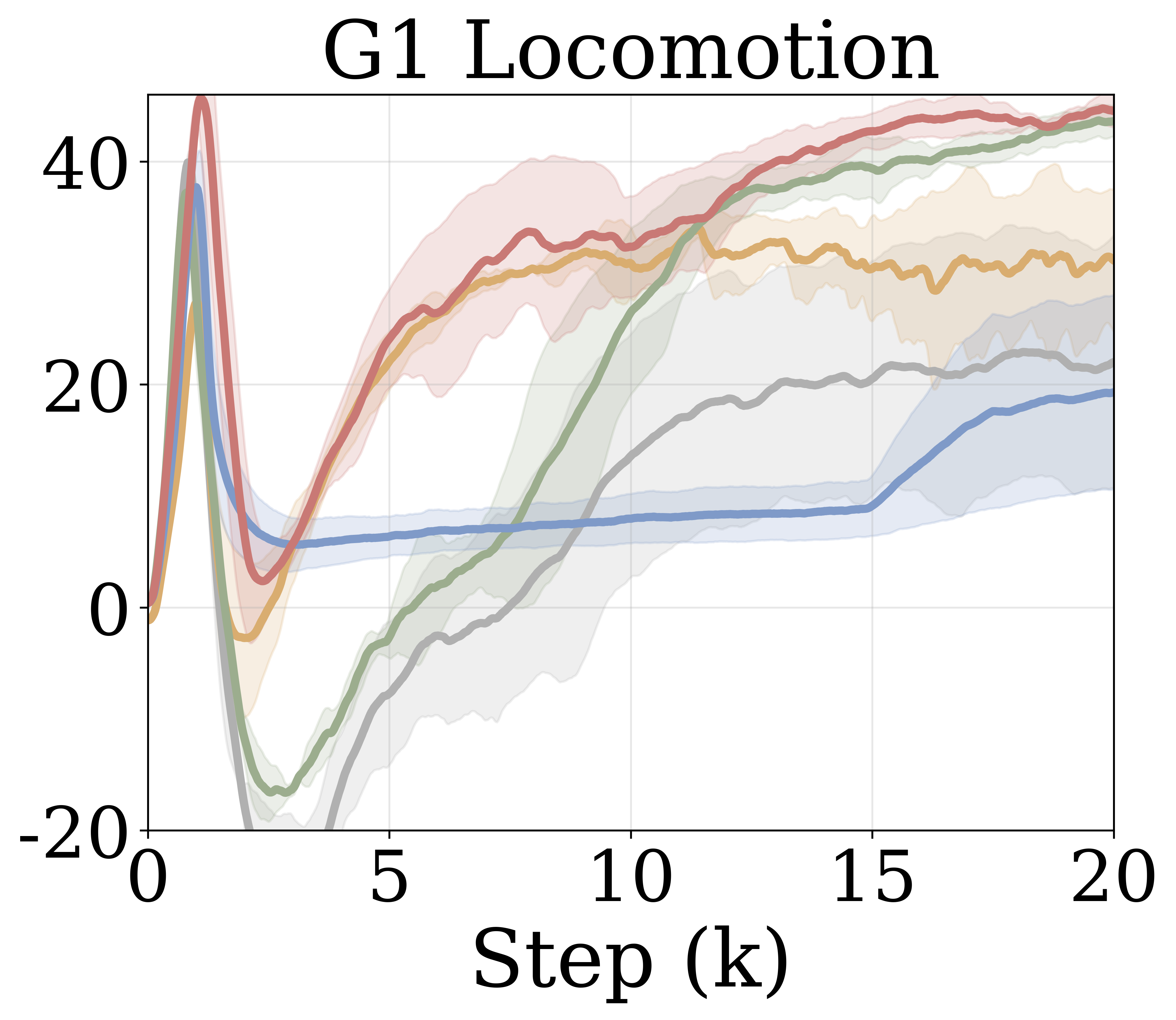}
    \vspace{-2mm}
    \caption{Training reward curves for four tasks: Go2 trotting, Go2 pronking (Appendix~\ref{sec:appendix-additional-experiment}), Go2 locomotion with joint-failure exposure (Sec.~\ref{sec:exp-joing-failure}), and G1 humanoid locomotion (Sec.~\ref{sec:exp-humanoid}).}
    \label{fig:training-efficiency}
    \vspace{-4mm}
\end{figure}

\vspace{-2mm}
\section{Conclusion}
\vspace{-2mm}
We introduced a morphological symmetry graph representation for robot policy learning. 
Our representation completes the robot's topological graph with the permutation and sign actions induced by morphology. This makes the graph specify not only \emph{where} information should flow across the articulated body, but also \emph{how} physical quantities should transform across symmetric body parts. Instantiated in \method{}, this representation yields a symmetry-equivariant graph actor and a symmetry-invariant graph critic, enforcing the desired policy and value structure by construction while remaining compatible with standard actor-critic optimization.

The experiments show that this is a strong base representation for locomotion policy training across embodiments. On the Unitree Go2 quadruped and Unitree G1 humanoid, \method{} improves symmetry generalization, robustness to asymmetric joint failures, sample efficiency, model efficiency, and zero-shot sim-to-real deployment relative to topology-aware and symmetry-aware baselines. These gains indicate that kinematic connectivity and morphological coordinate transformations should be encoded jointly, since topology alone is incomplete, and symmetry alone is too coarse when it is detached from the robot's graph. By combining both, morphological symmetry graphs provide a physically aligned function space for learning interpretable and transferable policies on articulated robots.

\vspace{-2mm}
\section{Limitations}
\vspace{-2mm}

Our current validation is limited to bilateral $\groupC_2$ symmetry on one quadruped and one humanoid, and the graph, node orbits, and coordinate action maps are still specified from each robot morphology rather than learned automatically. We have not yet tested richer or approximate symmetry structures, such as rotational $\groupC_n$ robots, reflective $\groupK_4$ quadrupeds, radially arranged multi-arm systems, or symmetric manipulators, nor symmetries that change under payloads. The humanoid experiments also focus on lower body locomotion, leaving motion retargeting and whole body tracking for more dynamic behaviors as future work. Another limitation is that the current experiments instantiate the representation separately on each robot, rather than demonstrating policy transfer across embodiments with different node features, action spaces, and symmetry orbits. Future work will address these limitations by automating symmetry graph construction and symmetry action deriving. We also plan to replace graph convolution with attention within the graph neural network to make message aggregation more expressive under the same symmetry constraints, and to build benchmarks that directly test cross-embodiment transfer and broader real-robot robustness.

\bibliography{bib/IEEEabrv, bib/strings-full, reference}  %

\clearpage
\appendix
\begin{appendices}
\counterwithin{figure}{section}
\counterwithin{table}{section}
\setcounter{footnote}{0}
\section{Preliminaries}

This section fixes the notation used to describe symmetry-aware robot policy representations. We separate two group actions that play different roles throughout the paper: the \emph{morphological action} $\morphOp$ on physical robot signals, and the induced \emph{graph action} $\Glact$ on graph-indexed features.

\subsection{Symmetric Markov Decision Processes}\label{sec:appendix-symmetric-mdp}
A Markov Decision Process (MDP) is a tuple \mbox{$\mathcal{M}=(\mathcal{S},\mathcal{A},r,T,p_0)$}, where $\mathcal{S}$ is the state space, $\mathcal{A}$ is the action space, $r$ is the reward function, $T$ is the transition kernel, and $p_0$ is the initial-state distribution. 
Given a group $\groupG$, an MDP is $\groupG$-symmetric when its reward, transition, and initial distribution are consistent with the group actions. In such an MDP, the optimal policy is equivariant~\cite{zinkevich2001symmetry_mdp_implications}:
\begin{equation}
    g \Glact  \pi^*(s) = \pi^*(g \Glact s) \mid \forall s \in \mathcal{S}, g \in \groupG,
\label{eqn:g_policy}
\end{equation}
and the optimal value functions are $\groupG$-invariant~\cite{zinkevich2001symmetry_mdp_implications}:
\begin{equation}
    V^{\pi^*}(g \Glact s) = V^{\pi^*}(s)  \mid \forall s \in \mathcal{S}, g \in \groupG.
    \label{eqn:g_value}
\end{equation}
In an articulated robot, a morphology symmetry acts on physical state and action coordinates by permuting symmetric body parts and changing coordinate signs when required by the reflection. We denote this physical action by $g\morphOp s$ and $g\morphOp a$. These equations motivate our representation design: policies should transform actions consistently under morphology symmetries, while critics should assign the same value to symmetric states.

\subsection{Morphological-Symmetry-Equivariant GNNs}
Morphological symmetry in articulated robots stems from replicated kinematic chains and paired body parts with symmetric mass and inertia~\cite{apraez2025morphological}. Prior work shows that combining such symmetry and kinematic structure with graph neural networks (GNNs) yields sample and model efficiency robotic dynamics models~\cite{xie2025_mshgnn}. We use the same principle as a building block for policy learning.

Let $x$ denote physical robot signals such as joint positions, velocities, observations, or actions, and let $g\morphOp x$ denote the corresponding morphological transformation. Let $X$ denote graph-indexed features. We write $g\Glact X$ for the induced graph action: a permutation of graph feature slots according to the symmetry orbits of the robot graph. In our construction, coordinate sign changes belong to the physical action $\morphOp$ and are absorbed by the encoder/decoder; after encoding, $\Glact$ acts on graph tensors by reindexing node features and hidden embeddings.

Given a graph network $z_{\graphG}$ that is equivariant to this graph action,
\begin{equation}
    z_{\graphG}(g\Glact X)=g\Glact z_{\graphG}(X),
\end{equation}
a morphological-symmetry encoder $h$ and decoder $l$ connect physical coordinates to graph coordinates by satisfying
\begin{align}
    h(g\morphOp x) = g\Glact h(x), \label{eq:appendix_h_def} \quad
    g\morphOp l(X) = l(g\Glact X). 
\end{align}
The composite map
\begin{equation}
    f_{\graphG}(x)=l\!\left(z_{\graphG}(h(x))\right)
\end{equation}
is therefore morphological-symmetry-equivariant:
\begin{equation}
    g\morphOp f_{\graphG}(x)=f_{\graphG}(g\morphOp x),
    \quad \forall g\in\groupG.
\end{equation}
This prior result provides the equivariant actor backbone used in our method, and the next section builds a complete policy-learning representation by pairing it with an invariant critic and a robot-to-graph construction for locomotion observations.

\section{Proof of Morphological Symmetry Invariance}

To enforce invariant value estimation under morphological symmetry in RL, we introduce a morphological-symmetry-invariant GNN, given in Theorem~\ref{theorem:ms-gnn-in}.

\begin{theorem}[MS-GNN-Inv]\label{theorem:ms-gnn-in}
Let $h$ be the encoder defined in Eq. \eqref{eq:appendix_h_def}~\cite{xie2025_mshgnn}, and let $\ell_I$ be a feature mapping invariant with respect to the geometric symmetry. Define
\mbox{$f_{\graphG_I}(x)=\ell_I\!\big(z_{\graphG}(h(x))\big)$}, where $z_{\graphG}$ is a geometric-symmetry-equivariant GNN. Then $f_{\graphG_I}$ is invariant under morphological-symmetry group actions:
\begin{align}
\forall\, g\in\groupG,\qquad  f_{\graphG_I}(g \morphOp x)=f_{\graphG_I}(x).
\end{align}
\end{theorem}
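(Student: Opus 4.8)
The plan is to prove the invariance by a direct composition argument, threading a single group element $g$ through the three stages of $f_{\graphG_I}$ and carefully tracking which action is in force at each stage: the morphological operator $\morphOp$ acting on joint-space signals versus the geometric operator $\Glact$ acting on the graph/latent representations. The central observation is that the encoder $h$ is precisely the device that converts the morphological operator into the geometric one, so that the equivariance of $z_{\graphG}$ and the invariance of $\ell_I$ --- both stated with respect to $\Glact$ --- can be brought to bear in sequence and ultimately collapse the group action on the output.

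First I would start from the definition and apply the morphological action to the input, $f_{\graphG_I}(g \morphOp x) = \ell_I(z_{\graphG}(h(g \morphOp x)))$. Then, invoking the defining property of the encoder in Eq.~\eqref{eq:h_def}, I would rewrite $h(g \morphOp x) = g \Glact h(x)$, which moves the group action out of joint space and into the graph representation as the geometric action $\Glact$. At this point the argument of $z_{\graphG}$ carries a $\Glact$-action, so I would next use the geometric-symmetry equivariance $z_{\graphG}(g \Glact h(x)) = g \Glact z_{\graphG}(h(x))$ to commute $g$ past the GNN. Finally, the geometric invariance of the readout $\ell_I$ absorbs the remaining action, $\ell_I(g \Glact z_{\graphG}(h(x))) = \ell_I(z_{\graphG}(h(x)))$, which is exactly $f_{\graphG_I}(x)$. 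The chain I expect to write is
\begin{align}
f_{\graphG_I}(g \morphOp x)
&= \ell_I\big(z_{\graphG}(h(g \morphOp x))\big)
= \ell_I\big(z_{\graphG}(g \Glact h(x))\big) \nonumber \\
&= \ell_I\big(g \Glact z_{\graphG}(h(x))\big)
= \ell_I\big(z_{\graphG}(h(x))\big)
= f_{\graphG_I}(x). \nonumber
\end{align}

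There is no substantive analytic obstacle here; the result follows entirely from the three stated commutation and absorption relations. The only point requiring care is bookkeeping: the morphological operator $\morphOp$ and the geometric operator $\Glact$ are distinct actions living on different spaces, and one must apply the encoder identity of Eq.~\eqref{eq:h_def} exactly once --- before either the equivariance or the invariance step --- so that every subsequent property is invoked with respect to the action $\Glact$ for which it was established. I would also note explicitly that, unlike the equivariant composite $f_{\graphG}$ built with the decoder $l$ (which satisfies Eq.~\eqref{eq:l_def} and hence re-introduces a $\morphOp$-action on the output), replacing the decoder with the invariant map $\ell_I$ is exactly what annihilates the output action, yielding invariance rather than equivariance and thereby furnishing the $\groupG$-invariant value function required by Eq.~\eqref{eqn:g_value}.
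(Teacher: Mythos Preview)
Your proposal is correct and matches the paper's proof essentially step for step: both unfold $f_{\graphG_I}(g \morphOp x)$, apply the encoder property Eq.~\eqref{eq:h_def} to convert $\morphOp$ into $\Glact$, push $g$ through $z_{\graphG}$ by its geometric equivariance, and absorb it via the invariance of $\ell_I$. Your additional commentary on the $\morphOp$/$\Glact$ bookkeeping and the contrast with the equivariant decoder $l$ is accurate but not part of the paper's proof.
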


\begin{proof}
Let $f_{\graphG_I}=\ell_I\circ z_{\graphG}\circ h$, for any $g\in\groupG$ and $x$,
\begin{equation}
\vspace{-5mm}
\label{proof:msi_2}
\begin{aligned}
f_{\graphG_I}(g \morphOp x)
&= \ell_I\!\big(z_{\graphG}(h(g \morphOp x))\big) \\
&\overset{\text{equiv. of }h}{=} \ell_I\!\big(z_{\graphG}(g \Glact h(x))\big) \\
&\overset{\text{equiv. of }z_{\graphG}}{=} \ell_I\!\big(g \Glact z_{\graphG}(h(x))\big) \\
&\overset{\text{inv. of }\ell_I}{=} \ell_I\!\big(z_{\graphG}(h(x))\big) \\
&= f_{\graphG_I}(x).
\end{aligned}
\end{equation}
\end{proof}

\vspace{5mm}
In our setting, geometric symmetry acts as a permutation of graph node indices, so $\Glact$ is implemented by a permutation matrix. The invariant head $\ell_I$ can be realized via permutation-invariant operators, such as average or max pooling.

\section{Implementation of Morphological Symmetry Encoder}\label{sec:appendix-encoder-details}
The encoder $h(\cdot)$ maps physical robot signals into graph features that transform only by graph reindexing under $\groupG$. Following Eq.~\eqref{eq:appendix_h_def}, it satisfies \mbox{$h(g\morphOp x)=g\Glact h(x)$}. For $\groupC_2$ sagittal reflection, we implement $h$ with node-wise sign masks. The detailed masks used for Go2 node features is shown in Table~\ref{tab:operator-design}. Other robots use the same recipe with node orbits and sign masks induced by their morphology.

\begin{table}[htbp]
\centering
\caption{Implementation of the Go2 $\groupC_2$ morphological symmetry encoder. Element-wise sign masks are applied to node features before message passing.}
\label{tab:operator-design}
\begin{tabular}{@{}lll@{}}
\toprule
\textbf{Group Action}                       & \textbf{Sign Mask}                & \textbf{Implementation}                           \\ \midrule
$e\morphop\mathbf{f}^b_{\text{base}, t}$    & $\mathbf{h}_e^b=[1, 1, 1, 1, 1, 1]^{\times H}$        & $\mathbf{h}_e^b \odot \mathbf{f}^b_{\text{base}, t}$      \\
$g_s\morphop\mathbf{f}^b_{\text{base}, t}$  & $\mathbf{h}_{g_s}^b=[1, -1, 1, 1, -1, -1]^{\times H}$ & $\mathbf{h}_{g_s}^b \odot \mathbf{f}^b_{\text{base}, t}$  \\
$e\morphop\mathbf{f}^j_{\text{joint}, t}$   & $\mathbf{h}_e^j=[1, 1, 1, 1, 1]^{\times H}$           & $\mathbf{h}_e^j \odot \mathbf{f}^j_{\text{joint}, t}$     \\
$g_s\morphop\mathbf{f}^j_{\text{joint}, t}$ & $\mathbf{h}_{g_s}^j=[-1, -1, -1, -1, 1]^{\times H}$   & $\mathbf{h}_{g_s}^j \odot \mathbf{f}^j_{\text{joint}, t}$ \\ \bottomrule
\end{tabular}
\end{table}

\section{Additional Experiments \& Results}\label{sec:appendix-additional-experiment}
To validate \method{} in terms of robustness to out-of-distribution command and general emboiment. We further conduct experiments on two quadrupedal robots with $\mathbb{C}_2$ symmetry (Unitree Go2~\cite{unitreego2}, Xiaomi CyberDog2~\cite{cyberdog2}) across four locomotion tasks in simulation and on real hardware.

\begin{wrapfigure}{R}{0.5\textwidth}
    \centering
    \includegraphics[width=0.98\linewidth]{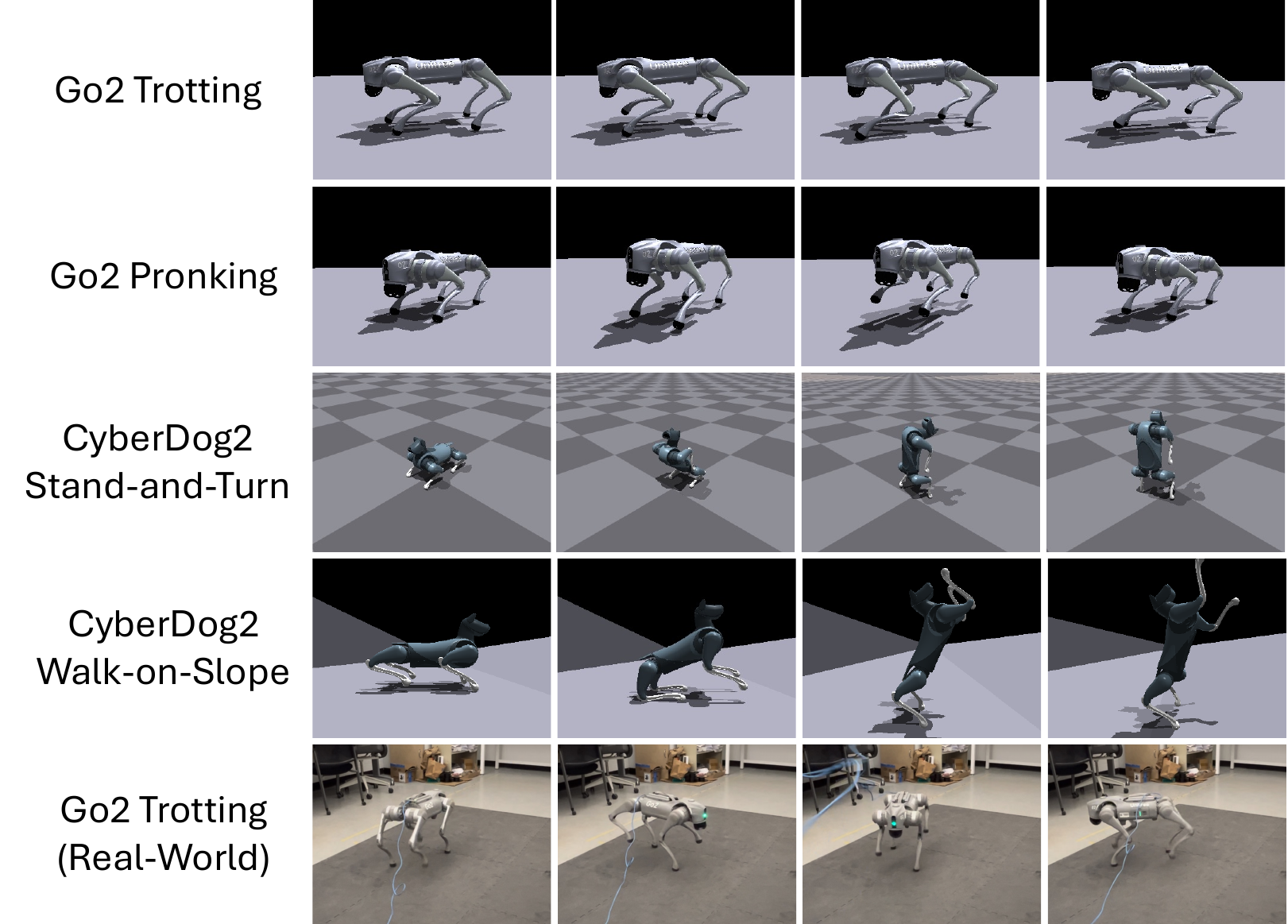}
    \caption{\small Simulation environments for four locomotion tasks on two quadrupedal robots, and hardware deployment on Go2.}
    \vspace{-3mm}
    \label{fig:sim_env}
\end{wrapfigure}

\subsection{Experiments Setup and Metrics}
To assess the symmetry generalization enabled by our symmetry-equivariant policy learning, we train \emph{trotting} and \emph{pronking} locomotion policies on Go2 using commands in one direction, and evaluate the learned policies in both the trained and \emph{mirrored} directions in simulation and on hardware. This \emph{Walk-to-One-Side} setup enables the evaluation of out-of-distribution (OOD) command-following generalization inherited from $\mathbb{C}_2$ equivariance and the sim-to-real robustness.
To assess the advantages of jointly encoding kinematic structure and morphological symmetry in policy learning, we further evaluate \method{} on two more challenging 
bipedal locomotion tasks on CyberDog2 in simulation: \emph{Stand-and-Turn} (bipedal turning) and \emph{Walk-on-Slope} (rear-leg climbing).
The reward terms for the \emph{Walk-to-One-Side} task are adopted from~\cite{margolis2022walktheseways, wtw2}, and the settings for bipedal locomotion tasks are based on~\cite{su2024symmloco}. The simulation environments and hardware experiments are shown in Figure~\ref{fig:sim_env}.

\begin{itemize}[leftmargin=*]
    \item \emph{Walk-to-One-Side} (Trot/Pronk): Policies are trained using one-side lateral and yaw-rate commands: \mbox{$c_x \in [-1.0, 1.0]$} $\mathrm{m/s}$, \mbox{$c_y \in [0.0, 0.6]$} $\mathrm{m/s}$, \mbox{$c_{\omega} \in[0.0, 1.0]$} $\mathrm{rad/s}$, and are evaluated on commands spanning both sides.
    \item \emph{Walk-on-Slope:} CyberDog2 climbs an inclined surface (up to $11.3^\circ$) using its hind legs for \SI{20}{\second}.
    \item \emph{Stand-and-Turn:} CyberDog2 stands on its hind legs and performs two in-place turns following heading commands. 
\end{itemize}

\textbf{Training Protocol.}
For each task, all methods are trained using the same training pipeline, task-specific hyperparameters and reward functions. We use the Adam~\cite{Kingma2015Adam} optimizer with the best-performing initial learning rates selected via sweeps. Learning rates are dynamically managed using a KL-divergence-based adaptive scaling mechanism with the adaptive rate constrained to $[1\times10^{-5}, 1\times10^{-2}]$. All policies are trained in Isaac Gym~\cite{isaacgym} for $5{,}000$ iterations on NVIDIA RTX~4090 or L40S GPUs.

\textbf{Evaluation Metrics.} 
We use the following metrics to quantitatively evaluate the policy performance:

\begin{itemize}[leftmargin=*]
    \item Root Mean Square Error (\textbf{RMSE}) measures the root-mean-square normalized velocity tracking error under in-distribution commands:
    \begin{equation*}
        \mathrm{RMSE} = 
        \sqrt{\frac{1}{T}\sum_{t=1}^T
        \left(e_{x,t}^2 + e_{y,t}^2 + e_{\omega,t}^2\right)},
    \end{equation*}
    where $e_{x, t}, e_{y, t}$ and $e_{\omega, t}$ are the nomalized velocity tracking errors in the $x$, $y$, and yaw direcitons, respectively, computed as \mbox{$e_{t} = \frac{(v_{t} - c_{t})}{c_{t}}$};
    $v_{(\cdot)}$ denotes the base velocity and $c_{(\cdot)}$ denotes the commanded velocity; 
    $T$ is the total number of time steps in the evaluation window. Additionally, we report the Mean Absolute Error (\textbf{MAE}) of velocity tracking:
    \[
        \mathrm{MAE}_x = \frac{1}{T}\sum_{t=1}^{T}|v_{x,t} - c_{x,t}|, \quad \mathrm{MAE}_\omega = \frac{1}{T}\sum_{t=1}^{T}|v_{\omega,t} - c_{\omega,t}|,
    \] 
    for \textit{walk-on-slope} task and \textit{stand-and-turn} task separately.

    \item \textbf{RMSE-O} measures the RMSE under out-of-distribution commands. 
    
    \item Cost of Transport (\textbf{CoT}) measures energy efficiency:
    \begin{equation*}
        \mathrm{CoT} = \frac{\sum_{t=1}^{T} \sum_{j=1}^{12} [\,\tau_{t}^j \dot{q}_t^j\,]^{+}}
                            {\sum_{t=1}^{T} |\nu_t|},
    \end{equation*}
    where $\tau^j$ and $\dot{q}^j$ are the torque and angular velocity of joint $j$, $\nu_t$ is the horizontal base velocity.
    
    \item Cost of Transport OOD (\textbf{CoT-O}) measures the CoT under out-of-distribution commands.
\end{itemize}

\subsection{Additional Results and Analysis}
\label{sec:results}

We additionally compare \method{} with baselines across four tasks, evaluating the policy's symmetry generalization ability (Sec.~\ref{sec:res-a}), bipedal locomotion performance (Sec.~\ref{sec:res-b}). And we also evaluate sim-to-real transfer (Sec.~\ref{sec:res-d}) on quadrupedal locomotion experiments.

\subsubsection{Symmetry Generalization Evaluation}
\label{sec:res-a}

\begin{table*}[t]
\caption{
    Performance evaluation of four methods on the \emph{Walk-to-One-Side} task in simulation. 
    Results are averaged over 100 test epsoides.
    Best results are shown in \textbf{bold}, and second-best results are \uline{underlined}. 
    M and K denote \textit{Morphological Symmetry} and \textit{Kinematic Structure}, respectively. 
    Across both trot and pronk gaits, \method{} consistently achieves the lowest out-of-distribution tracking errors and ranks the first or second across all metrics.
}
\label{tab:go2-walk-to-one-side}
\resizebox{\linewidth}{!}{
\centering
\begin{tabular}{@{}lcccccccccc@{}}
\toprule
\multirow{2}{*}{Method} & \multicolumn{2}{c}{Design}                            & \multicolumn{4}{c}{Trot}                                                                                   & \multicolumn{4}{c}{Pronk}                                                                                   \\ \cmidrule(l){2-3}\cmidrule(l){4-7}\cmidrule(l){8-11} 
                        & M                        & K                        & RMSE                       & CoT                       & RMSE-O                     & CoT-O                    & RMSE                        & CoT                      & RMSE-O                      & CoT-O                    \\ \midrule
PPO-MLP                 &                           &                           & \textbf{0.48 $\pm$ 0.11} & 185.9 $\pm$ 4.8           & 1.115 $\pm$ 0.07         & 213.9 $\pm$ 4.9          & {\ul 0.476 $\pm$ 0.05}    & {\ul 123.0 $\pm$ 2.4}    & {\ul 0.99 $\pm$ 0.07}     & \textbf{112.1 $\pm$ 7.2} \\
PPO-EMLP$^*$            & $\checkmark$         & \multicolumn{1}{l}{} & 1.006 $\pm$ 0.03         & 533.7 $\pm$ 11.6          & {\ul 1.006 $\pm$ 0.03}   & 534.0 $\pm$ 11.3         & 1.046 $\pm$ 0.04          & 242.9 $\pm$ 12.8         & 1.034 $\pm$ 0.04          & 236.9 $\pm$ 12.4         \\
MI-PPO                  & \multicolumn{1}{l}{} & $\checkmark$         & 0.678 $\pm$ 0.26         & \textbf{166.7 $\pm$ 11.0} & 1.39 $\pm$ 0.13          & \textbf{165.5 $\pm$ 5.7} & 2.781 $\pm$ 0.19          & 558.8 $\pm$ 19.6         & 3.177 $\pm$ 0.22          & 627.3 $\pm$ 28.4         \\
\method{} (ours)           & $\checkmark$         & $\checkmark$         & {\ul 0.532 $\pm$ 0.11}   & {\ul 173.0 $\pm$ 1.8}     & \textbf{0.537 $\pm$ 0.1} & {\ul 173.1 $\pm$ 1.3}    & \textbf{0.466 $\pm$ 0.09} & \textbf{123.0 $\pm$ 1.8} & \textbf{0.453 $\pm$ 0.08} & {\ul 122.3 $\pm$ 1.9}    \\ \bottomrule
\end{tabular}
}
\end{table*}

In the \emph{Walk-to-One-Side} task, policies are trained using one-sided command inputs and evaluated on both the trained and mirrored directions in simulation. We report velocity tracking errors (RMSE, RMSE-O) and energy efficiency (CoT, CoT-O) averaged over 100 test episodes in Table~\ref{tab:go2-walk-to-one-side}, and plot the tracked velocities in~Figure~\ref{fig:exp-go2-gait}.
In our experiments, we observe that PPO-EMLP fails to converge when trained with the full forward velocity command range under gait-related reward terms. Therefore, for a fair comparison, we train a reduced-command variant (PPO-EMLP$^*$) for the trot and pronk gaits by restricting the forward velocity command to $c_x = 0$ $\mathrm{m/s}$.

\begin{figure*}[htbp]
    \centering
    \begin{subfigure}[b]{0.495\textwidth} %
        \centering
        \includegraphics[width=\textwidth]{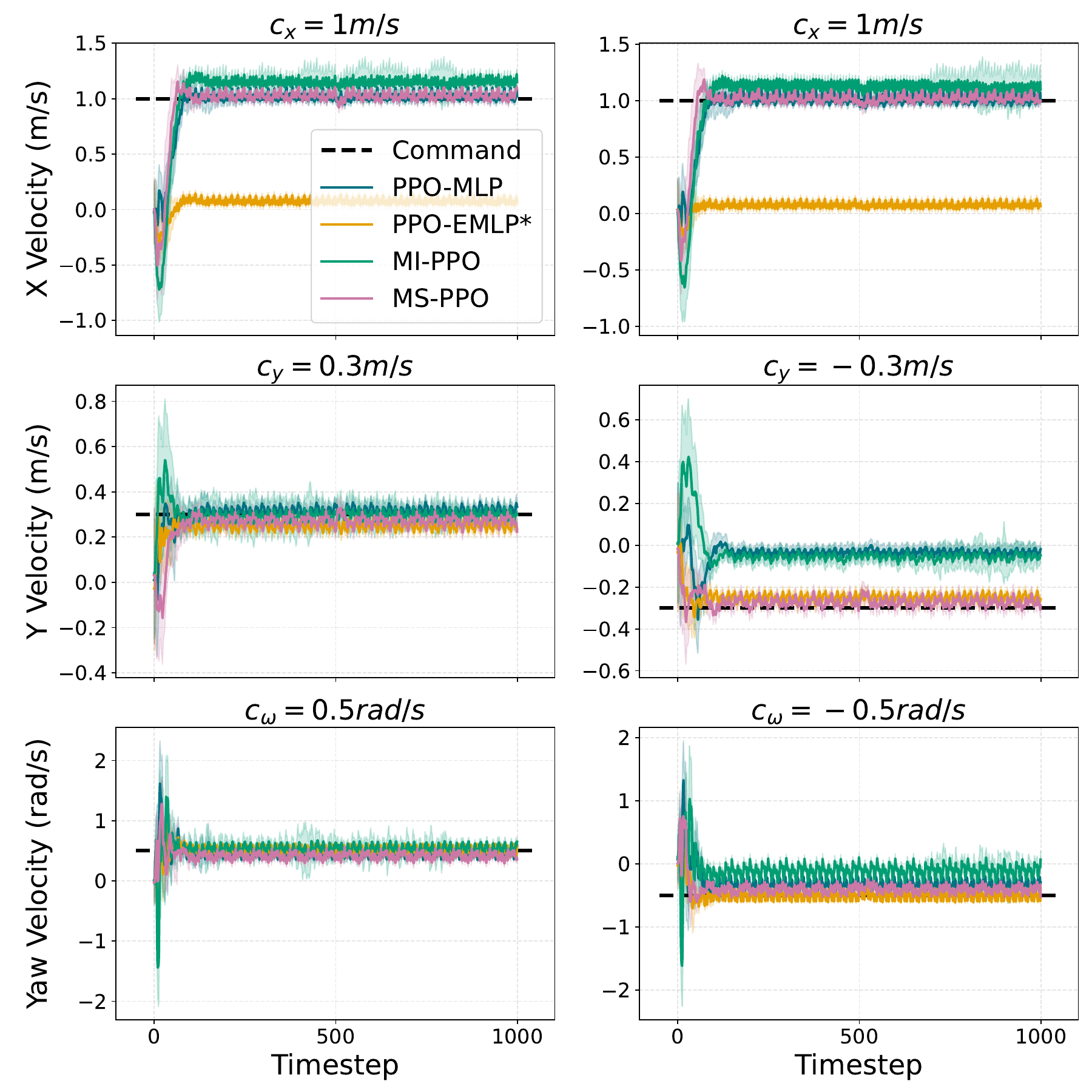}
        \caption{Trotting gait with in- (left) and out- (right) of-distribution commands.}
    \label{fig:tracking-error-trot}
    \end{subfigure}
    \hfill %
    \begin{subfigure}[b]{0.495\textwidth}
        \centering
        \includegraphics[width=\textwidth]{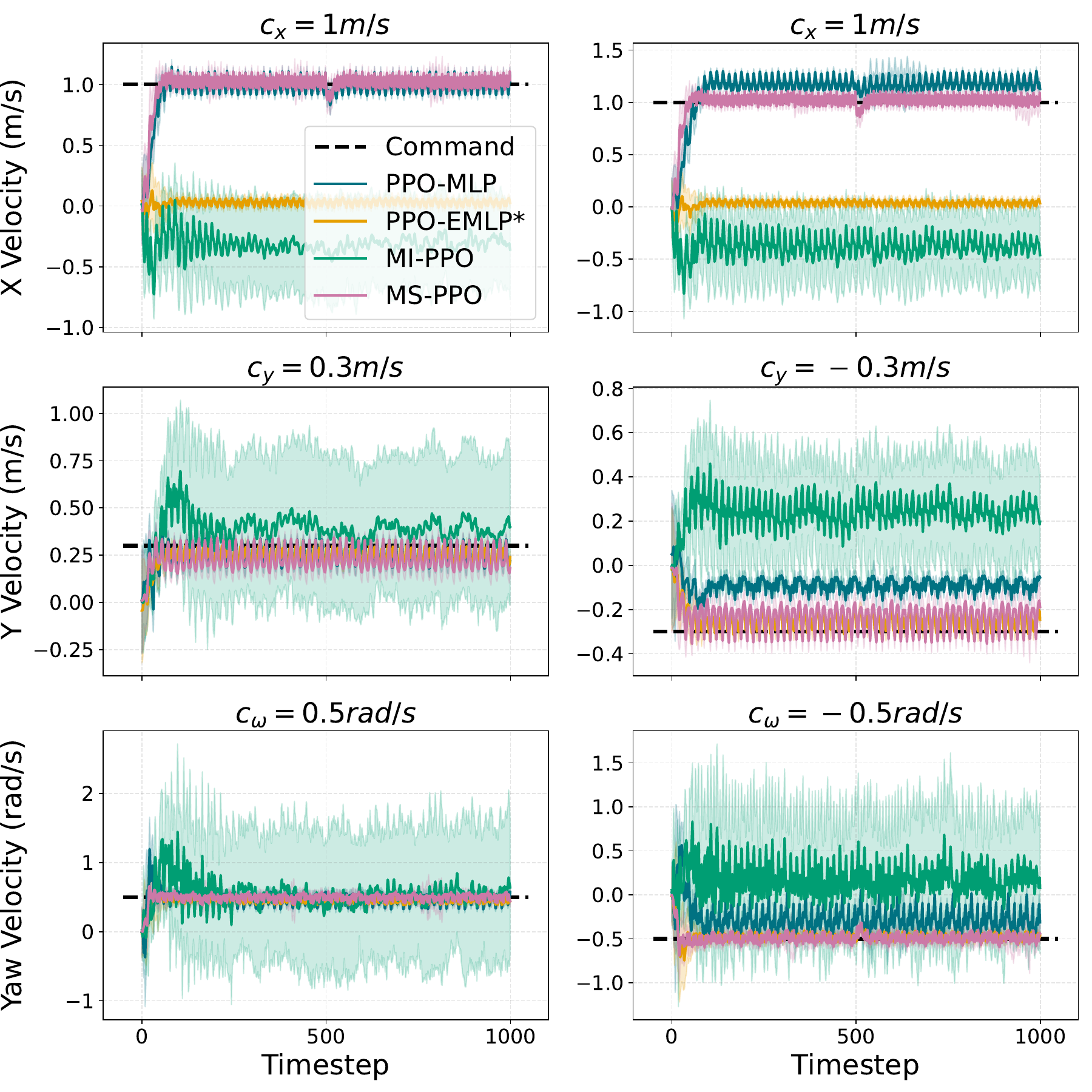}
        \caption{Pronking gait with in- (left) and out- (right) of-distribution commands.}
    \label{fig:tracking-error-pronk}
    \end{subfigure}
    \caption{
        Velocity tracking performance of four policies on the \emph{Walk-to-One-Side} task in simulation. 
        Results are averaged over 100 test episodes. 
        Dashed lines indicate commanded velocities, solid lines show the mean tracked velocities, and shaded regions denote one standard deviation.
        Due to incompatibility with gait-related rewards, PPO-EMLP$^\ast$ is trained with \mbox{$c_x = 0$ $\mathrm{m/s}$}, causing its $v_x$ to remain near zero.
        In both trotting and pronking simulations, \method{} converges to the target velocities faster and achieves more accurate tracking than the baseline methods.
    }
    \label{fig:exp-go2-gait}
\end{figure*}

\textbf{Trot Gait.} As shown in Table~\ref{tab:go2-walk-to-one-side}, \method{} exhibits the strongest symmetry generalization ability, achieving 51.8\%, 46.6\%, and 61.4\% lower RMSE-O than the PPO-MLP, PPO-EMLP$^*$, and MI-PPO baselines, respectively, while maintaining comparable in-distribution RMSE. This clearly demonstrates that the policy trained by \method{} is equivariant with respect to the $\mathbb{C}_2$ morphological symmetry. For energy efficiency, \method{} consistently achieves low CoT comparable to the best-performing baselines under both command evaluations. Although MI-PPO achieves the lowest CoT, its RMSE is approximately twice that of \method{}.

\textbf{Pronk Gait.}
The pronk gait imposes stronger symmetry requirements, as all four legs strike and leave the ground simultaneously, demanding precise coordination of whole-body momentum. 
Among all methods, \method{} is the only one that maintains low RMSE and CoT in both command directions. Its RMSE-O is 54.2\% and 85.7\% lower than that of PPO-MLP and MI-PPO, respectively. Figure~\ref{fig:tracking-error-pronk} further show that \method{} tracks OOD commands with minimal bias and variance, whereas PPO-MLP and MI-PPO exhibit unstable oscillations due to the lack of symmetry constraints.

\textbf{Analysis and Discussion.}
For both gaits, symmetry-aware methods (\method{} and PPO-EMLP$^*$) achieve similar RMSE and CoT across in- and out-of-distribution command evaluations, indicating their symmetry generalizability ability. In contrast, symmetry-agnostic baselines (PPO-MLP and MI-PPO) perform adequately in-distribution but degrade substantially under OOD commands, showing the advantage of leveraging morphological symmetry in policy learning. While PPO-EMLP also incorporates symmetry, it has stricter requirements for the training process, as discussed in~\cite{su2024symmloco}, and fails to learn a specific gait in our experiments.

In addition, we observe that morphology-aware policies (\method{} and MI-PPO) tend to achieve lower CoT, especially in trot gait, demonstrating the benefit of incorporating kinematic structure in policy learning. 
Although MI-PPO also incorporates kinematic structure, the lack of morphological symmetry mis-specifies robot representation and impairs control performance. It treats symmetric limbs as identical rather than enforcing the correct equivariance, preventing the policy from learning proper coordination patterns. By jointly leveraging kinematic structure and morphological symmetry, \method{} achieves better symmetry generalization, energy efficiency, and improved training stability.

\subsubsection{Bipedal Locomotion Performance}
\label{sec:res-b}
We further evaluate \method{} on two challenging bipedal locomotion tasks (\emph{Walk-on-Slope} and \emph{Stand-and-Turn}) on the CyberDog2 quadrupedal robot  in simulation. 
We report CoT and mean absolute velocity tracking errors ($\mathrm{MAE}_x$, $\mathrm{MAE}_\omega$), averaged over 200 test episodes in Table~\ref{tab:cyberdog2-performance}.

\begin{table}[ht]
\caption{
Performance evaluation on the \textit{Walk-on-Slope} and \textit{Stand-and-Turn} tasks in simulation. 
Results are averaged over 200 test episodes, with best results in \textbf{bold} and second-best \uline{underlined}.
\method{} achieves the lowest tracking error and comparable energy efficiency on both tasks.
}
  \centering
  \small
    \begin{tabular}{@{}lcccc@{}}
        \toprule
        \multirow{2}{*}{Method} & \multicolumn{2}{c}{Walk-on-Slope}                        & \multicolumn{2}{c}{Stand-and-Turn}                       \\ \cmidrule(l){2-3} \cmidrule(l){4-5} 
                                & CoT                          & $\mathrm{MAE}_x$                & CoT                          & $\mathrm{MAE}_\omega$             \\ \midrule
        PPO-MLP                     & {\ul 1092.66 $\pm$ 301.49}   & 0.242 $\pm$ 0.03          & \textbf{974.64 $\pm$ 308.88} & {\ul 0.412 $\pm$ 0.07}    \\
        PPO-EMLP                & 1474.27 $\pm$ 390.45         & {\ul 0.198 $\pm$ 0.03}    & {\ul 1000.71 $\pm$ 311.64}   & 0.491 $\pm$ 0.11          \\
        MI-PPO                  & \textbf{811.38 $\pm$ 245.61} & 0.239 $\pm$ 0.03          & -                            & -                         \\
        \method{}                  & 1255.1 $\pm$ 325.44          & \textbf{0.192 $\pm$ 0.02} & 1208.48 $\pm$ 378.95         & \textbf{0.395 $\pm$ 0.08} \\ \bottomrule
    \end{tabular}
  \label{tab:cyberdog2-performance}
\end{table}

As shown in Table~\ref{tab:cyberdog2-performance}, \method{} achieves stable control on both tasks. 
In \emph{Walk-on-Slope}, MI-PPO achieves the lowest CoT but suffers from a higher $\mathrm{MAE}_x$, indicating inefficient task execution despite its high energy efficiency. 
\method{} achieves the best $\mathrm{MAE}_x$ while maintaining competitive CoT, reflecting a favorable balance between control performance and energy efficiency. 
In \emph{Stand-and-Turn}, MI-PPO fails to converge, PPO-EMLP achieves comparable CoT but higher tracking errors, whereas \method{} trains stably and achieves the lowest $\mathrm{MAE}_\omega$.
These results highlight the benefits of integrating morphological symmetry and kinematic structure in policy learning. 
While symmetry-aware architectures enforce consistent left-right coordination during bipedal balancing, \method{} additionally leverages topological graph connectivity to stabilize training and balance tracking performance against energy efficiency. Overall, \method{} generalizes effectively from quadrupedal to bipedal locomotion, maintaining stable control in challenging tasks with highly coupled dynamics.

\subsubsection{Sim-to-Real Results}
\label{sec:res-d}
To evaluate sim-to-real transfer, we deploy the learned trotting policies on a physical Unitree Go2 robot. For clear visualization, the robot is commanded to perform in-place rotations for 15 $\mathrm{s}$ with \mbox{$c_x = c_y = 0$ $\mathrm{m/s}$} and \mbox{$c_\omega = \pm 1$ $\mathrm{rad/s}$}, where \mbox{$c_\omega<0$} (right-turn) is an OOD command.
Each trail is recorded using a fixed overhead camera, and representative keyframes are composed in Figure~\ref{fig:go2_turn} for visual comparison. 
\method{} achieves nearly perfect in-place turning in both directions (Figures~\ref{fig:go2real-msppo} and \ref{fig:go2real-msppo-l}), while PPO-EMLP$^*$ exhibits moderate drift. 
Both PPO-MLP and MI-PPO exhibit substantial drift consistent with the left-turn training bias, while MI-PPO bearly tracks the commanded velocity. 
These observations closely align with our simulation results, further confirming the symmetry generalization of \method{} and demonstrating the small sim-to-real gap of the learned policies.

\begin{figure*}[ht]
\centering
\begin{subfigure}{0.19\textwidth}
\includegraphics[width=1\linewidth,trim=15cm 8cm 15cm 2cm,clip]{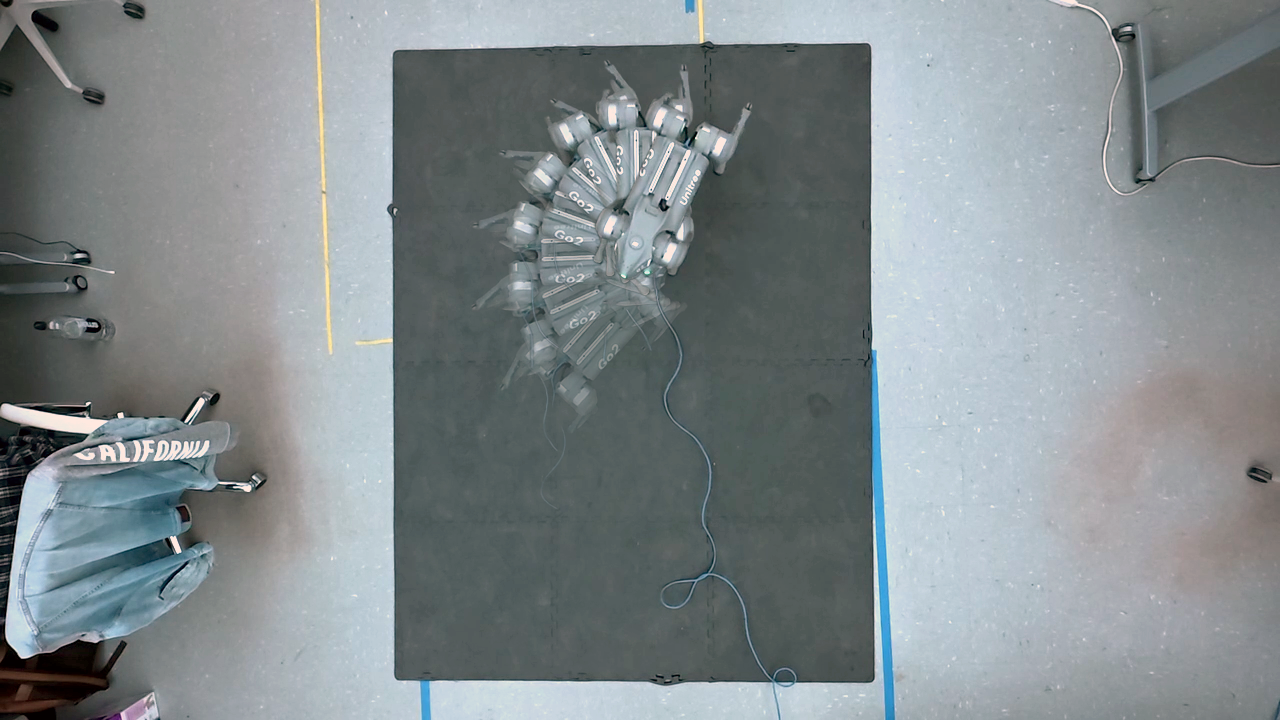}
\caption{\centering \small PPO-MLP \\(right turn)}
\label{fig:go2real-mlp}
\end{subfigure}
\begin{subfigure}{0.19\textwidth}
\includegraphics[width=1\linewidth,trim=16cm 6cm 14cm 4cm,clip]{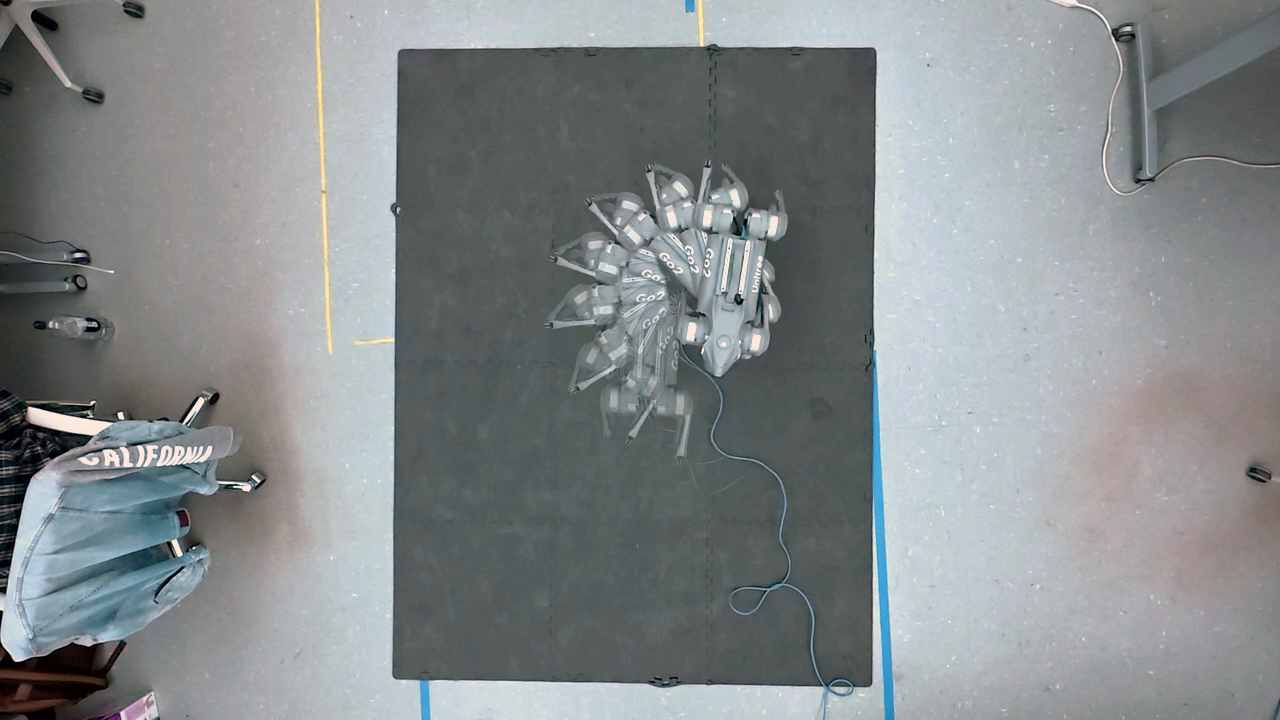}
\caption{\centering \small PPO-EMLP \\(right turn)}
\label{fig:go2real-emlp}
\end{subfigure}
\begin{subfigure}{0.19\textwidth}
\includegraphics[width=1\linewidth,trim=22cm 6cm 8cm 4cm,clip]{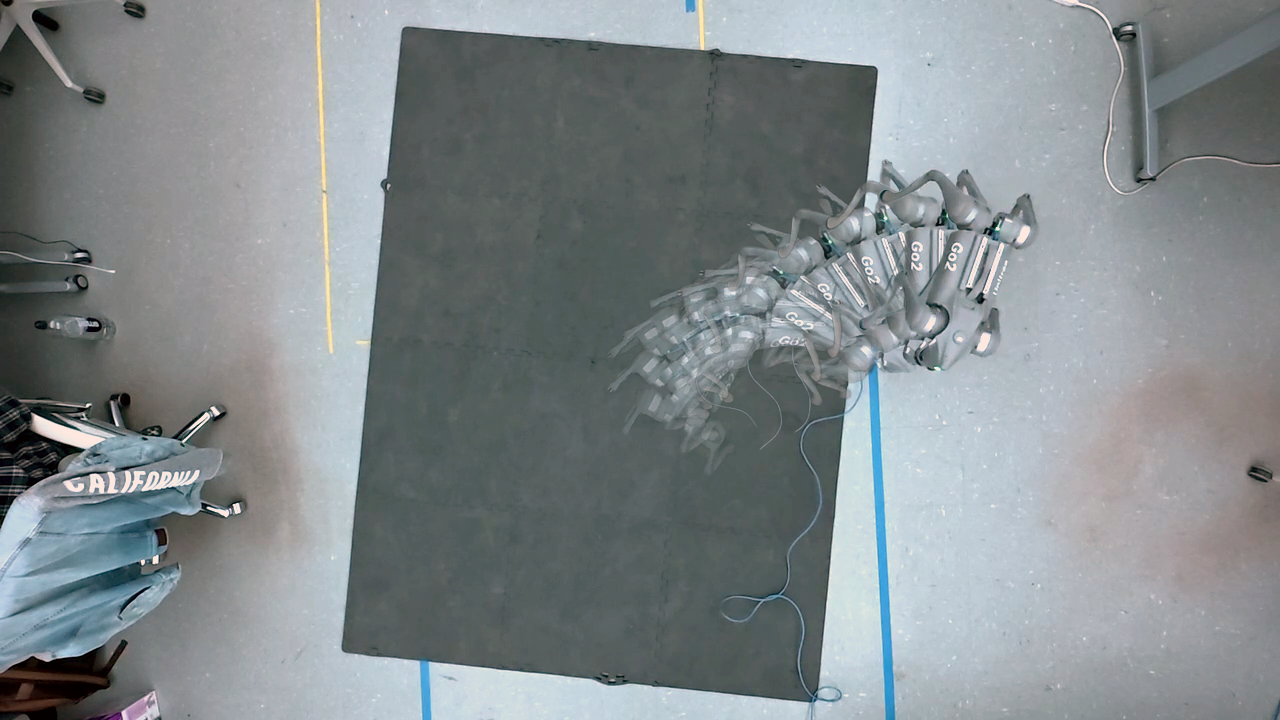}
\caption{\centering \small MI-PPO \\ (right turn)}
\label{fig:go2real-mippo}
\end{subfigure}
\begin{subfigure}{0.19\textwidth}
\includegraphics[width=1\linewidth,trim=10cm 5cm 20cm 5cm,clip]{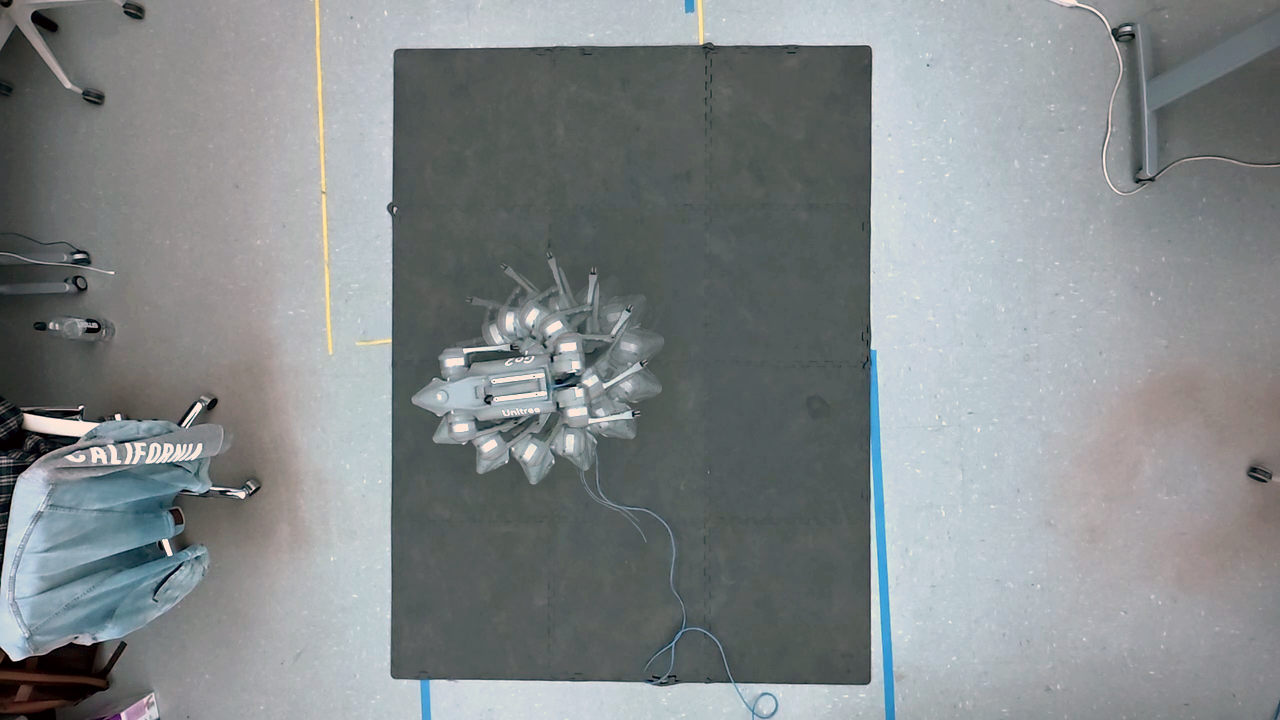}
\caption{\centering \small \method{} \\ (right turn)}
\label{fig:go2real-msppo}
\end{subfigure}
\begin{subfigure}{0.19\textwidth}
\includegraphics[width=1\linewidth,trim=10cm 6cm 20cm 4cm,clip]{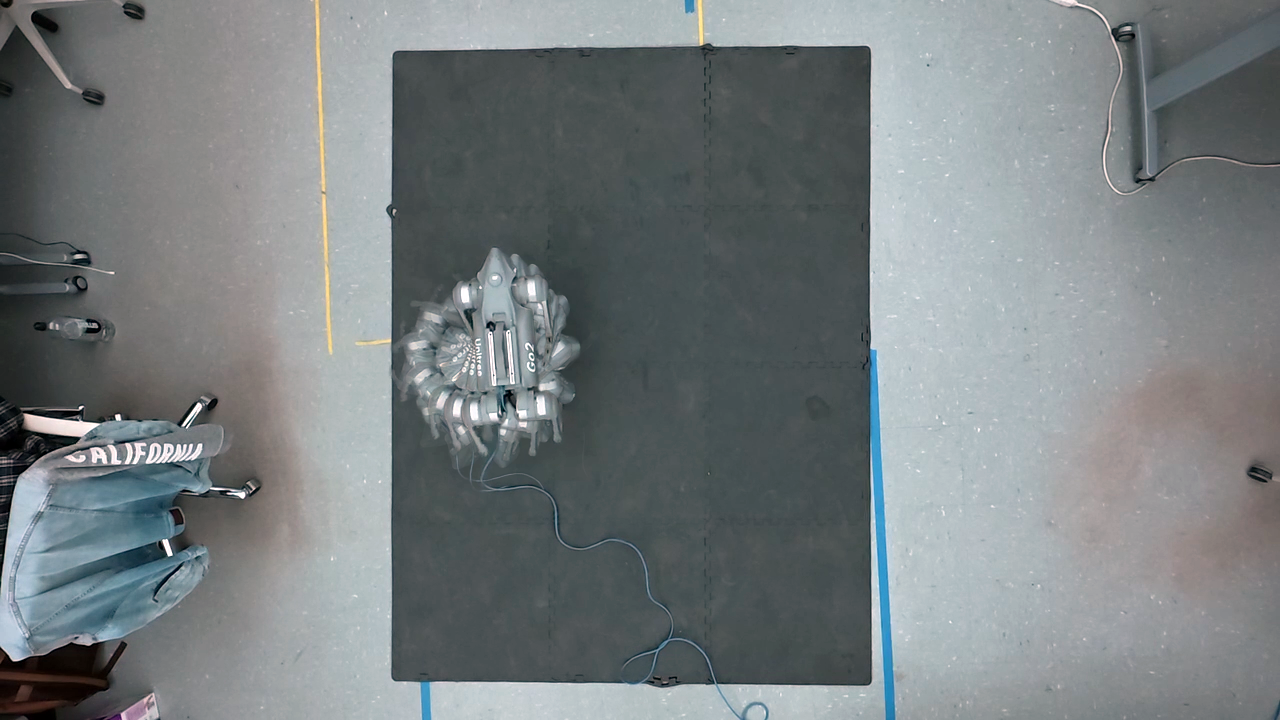}
\caption{\centering \small \method{} \\ (left turn)}
\label{fig:go2real-msppo-l}
\end{subfigure}
\caption{
    Symmetry generalization evaluation on a Go2 robot in the \emph{Walk-to-One-Side} trotting task. 
    Policies trained on left in-place turns are evaluated on out-of-distribution right-turn commands. 
    By issuing yaw rate commands and constraining linear velocity to zero, the desired motion is a pure in-place rotation.
    Among all methods, \method{} yields stable mirrored motion with the least drift, as shown by uniformly sampled frames, indicating its strong symmetry generalization ability.
}
\label{fig:go2_turn}
\end{figure*}

\section{Additional Qualitative Results of Go2 Joint Failure Task}
\label{sec:res-e}

Figure~\ref{fig:appendix-zero-torque-collaboration} shows the full dual-robot collaboration results under a rear-right calf joint failure on the lead robot. This setting is more challenging than single-robot walking because the lead robot must both compensate for the disabled joint and remain stable while interacting with the rear robot. PPO and PPO-EMLP lose balance by around 6 seconds, indicating that neither a generic MLP policy nor a flat symmetry-aware MLP policy handles the combined actuator and interaction disturbances reliably. PPO-Aug remains upright longer, but the lead robot develops large rotational drift, which also rotates the rear robot and destabilizes the pair. MI-PPO benefits from its graph backbone and survives briefly, but the lead robot quickly drifts laterally, showing poor directional stability. In contrast, \method{} maintains heading, forward progress, and stable pulling behavior throughout the sequence. These qualitative results support the main hardware finding: combining kinematic topology with morphology-consistent symmetry yields the most robust behavior under localized joint failure.

\begin{figure}[tb]
    \centering
    \vspace{-10pt}
    \setlength{\abovecaptionskip}{3pt}   %
    \setlength{\belowcaptionskip}{-8pt}  %
    \newcommand{\subfigheight}{1.6cm}
    \newcolumntype{C}[1]{>{\centering\arraybackslash}m{#1}}
    \setlength{\tabcolsep}{1pt} %

    \begin{tabular}{@{}C{0.04\textwidth}C{0.13\textwidth}C{0.13\textwidth}C{0.13\textwidth}C{0.13\textwidth}C{0.13\textwidth}C{0.13\textwidth}@{}}
        {} &
        {\scriptsize 0 s} &
        {\scriptsize 3 s} &
        {\scriptsize 6 s} &
        {\scriptsize 9 s} &
        {\scriptsize 12 s} & \\[-0.1em]
    
        \centering \raisebox{0pt}[0pt][0pt]{\rotatebox[origin=c]{90}{\tiny PPO}} &
        \includegraphics[width=\linewidth]{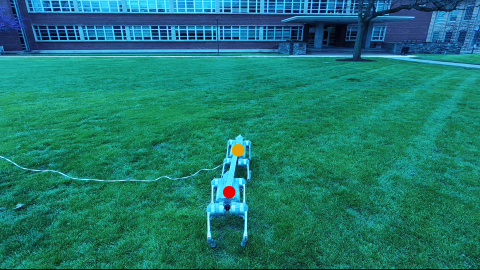} &
        \includegraphics[width=\linewidth]{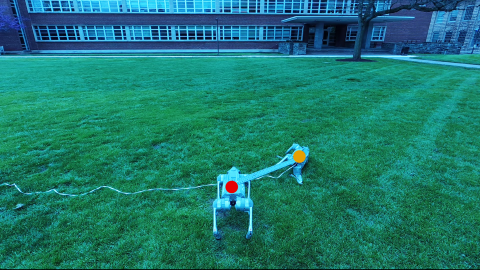} &
        \includegraphics[width=\linewidth]{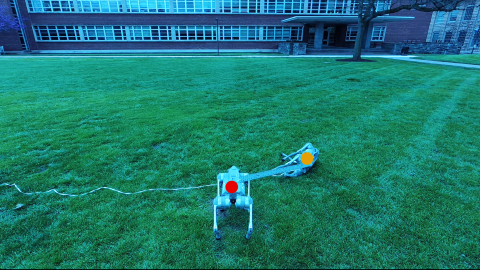} & 
        Fell at 6s & {} \\ 
        
        \centering \raisebox{0pt}[0pt][0pt]{\rotatebox[origin=c]{90}{\tiny PPO-EMLP}} &
        \includegraphics[width=\linewidth]{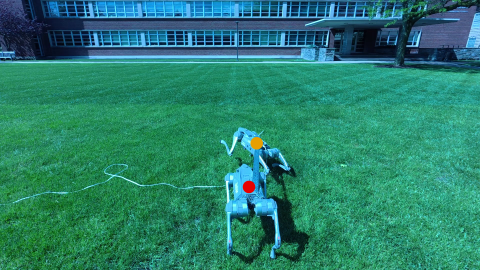} &
        \includegraphics[width=\linewidth]{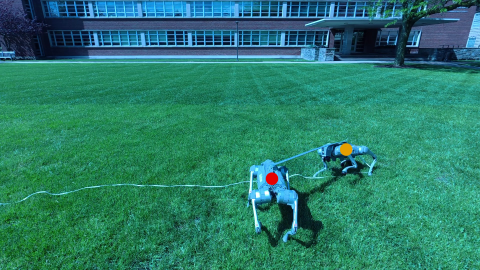} &
        \includegraphics[width=\linewidth]{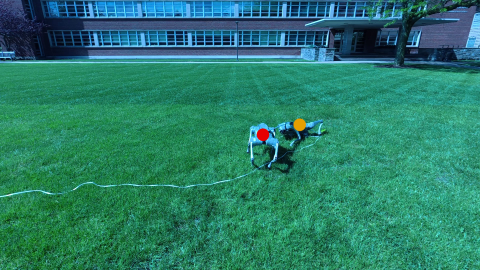} & Fell at 6s & {} & \\

        \centering \raisebox{0pt}[0pt][0pt]{\rotatebox[origin=c]{90}{\tiny PPO-Aug}} &
        \includegraphics[width=\linewidth]{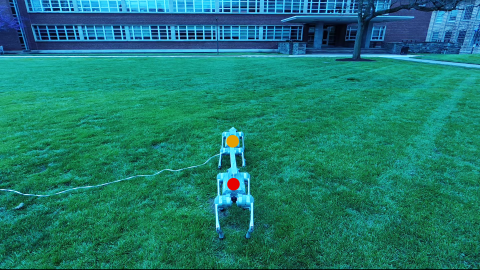} &
        \includegraphics[width=\linewidth]{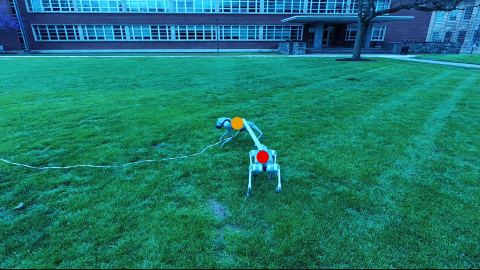} &
        \includegraphics[width=\linewidth]{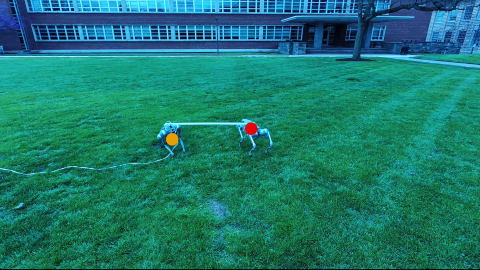} & 
        \includegraphics[width=\linewidth]{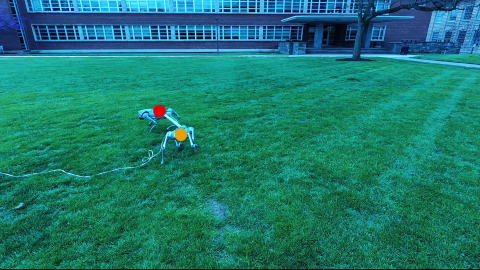} &
        \includegraphics[width=\linewidth]{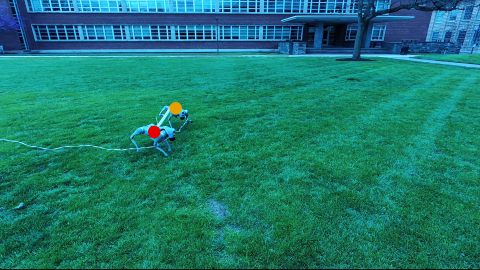} & \\

        \centering \raisebox{0pt}[0pt][0pt]{\rotatebox[origin=c]{90}{\tiny MI-PPO}} &
        \includegraphics[width=\linewidth]{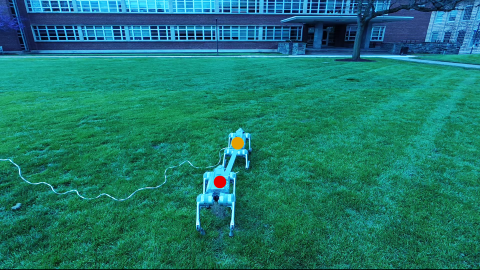} &
        \includegraphics[width=\linewidth]{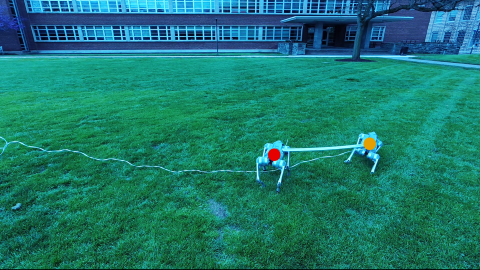} &
        \includegraphics[width=\linewidth]{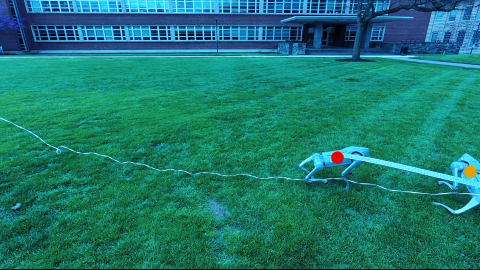} & Out of Camera at 6s & {} & \\

        \centering \raisebox{0pt}[0pt][0pt]{\rotatebox[origin=c]{90}{\tiny MS-PPO}} &
        \includegraphics[width=\linewidth]{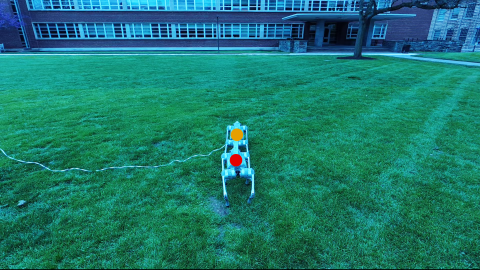} &
        \includegraphics[width=\linewidth]{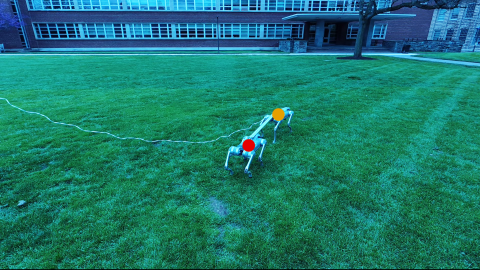} &
        \includegraphics[width=\linewidth]{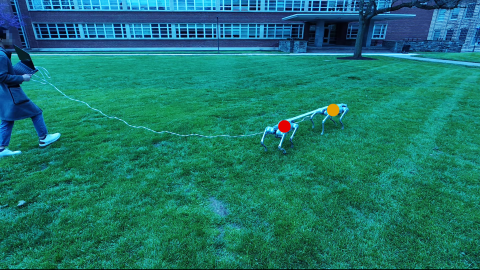} & 
        \includegraphics[width=\linewidth]{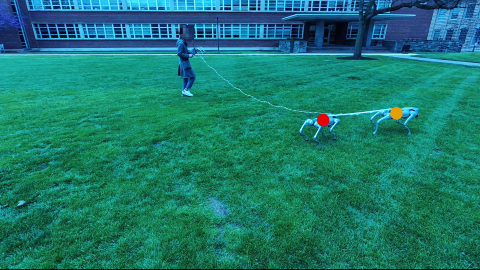} &
        \includegraphics[width=\linewidth]{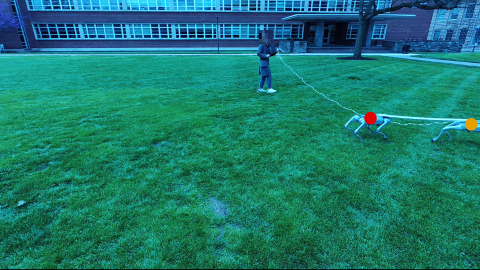} & \\
        
    \end{tabular}

    \caption{More results for dual-robot collaboration under a lead-robot joint failure. Columns show time and rows show policies. Yellow and red points indicate the lead and rear robots, respectively.}
    \label{fig:appendix-zero-torque-collaboration}
\end{figure}

\section{Ablation Study on Network Architecture}

We conduct an ablation study on the network architecture of \method{} by varying the number of message-passing layers and whether shortcut edges are included. The shortcut edges are directed edges from the base node to the thigh and knee nodes, allowing base-state and command information to be broadcast directly to these joints instead of propagating only through intermediate nodes along the kinematic graph. Table~\ref{tab:msgnn_ablation} reports evaluation results for normal trotting with a 1 m/s velocity command, the \textit{walk-figure-8} task, and two \textit{joint zero-torque} settings in which the rear-right thigh or calf joint is disabled. Figure~\ref{fig:ablation-training-efficiency} compares the training efficiency of the four \method{} variants.

\begin{table}[!htbp]
\centering
\caption{Ablation study for \method{} network architecture.}
\label{tab:msgnn_ablation}
\scriptsize
\resizebox{\linewidth}{!}{
\begin{tabular}{@{}ccccccccccc@{}}
\toprule
\multirow{2}{*}{\shortstack{Shortcut \\ Edge}} &
\multirow{2}{*}{Layers} &
\multirow{2}{*}{Param.} &
No Failure &
\textit{Walk-Figure-8} &
\multicolumn{3}{c}{\textit{Joint Zero-Torque} (Rear-Right Thigh)} &
\multicolumn{3}{c}{\textit{Joint Zero-Torque} (Rear-Right Calf)} \\
\cmidrule(lr){6-8} \cmidrule(l){9-11}
& & & Rew. $\uparrow$ & Rew. $\uparrow$ &
Rew. $\uparrow$ & Lin. TE $\downarrow$ & Ang. TE $\downarrow$ &
Rew. $\uparrow$ & Lin. TE $\downarrow$ & Ang. TE $\downarrow$ \\
\midrule
& 4 & 732,046 &
14.802 $\pm$ 0.064 &
11.276 $\pm$ 4.191 &
{\ul 13.745 $\pm$ 0.057} & 0.171 $\pm$ 0.104 & {\ul 0.130 $\pm$ 0.103} &
{\ul 6.805 $\pm$ 0.256} & {\ul 0.394 $\pm$ 0.224} & 0.229 $\pm$ 0.211 \\

$\checkmark$ & 4 & 732,046 &
{\ul 15.030 $\pm$ 0.039} &
\textbf{13.237$\pm$ 1.788} &
\textbf{14.796$\pm$ 0.078} & {\ul 0.116 $\pm$ 0.119} &  {\ul 0.130 $\pm$ 0.123} &
6.104 $\pm$ 0.204 & 0.484 $\pm$ 0.176 & {\ul 0.190 $\pm$ 0.174} \\

$\checkmark$ & 6 & 863,630 &
\textbf{15.405$\pm$ 0.064} &
12.988 $\pm$ 1.637 &
10.034 $\pm$ 4.821 & 0.127 $\pm$ 0.156 & 0.310 $\pm$ 0.161 &
\textbf{9.943$\pm$ 0.205}  & \textbf{0.229$\pm$ 0.214} & \textbf{0.178$\pm$ 0.179}  \\

$\checkmark$ & 8 & 995,214 &
14.892 $\pm$ 0.066 &
{\ul 13.060 $\pm$ 1.586} &
13.470 $\pm$ 3.479 & \textbf{0.094$\pm$ 0.150}  & \textbf{0.117$\pm$ 0.166}  &
2.223 $\pm$ 0.865 & 0.735 $\pm$ 0.168 & 0.361 $\pm$ 0.386 \\

\bottomrule
\end{tabular}
}
\end{table}

\begin{figure}[htbp]
    \centering
    \includegraphics[width=0.35\linewidth,valign=b]{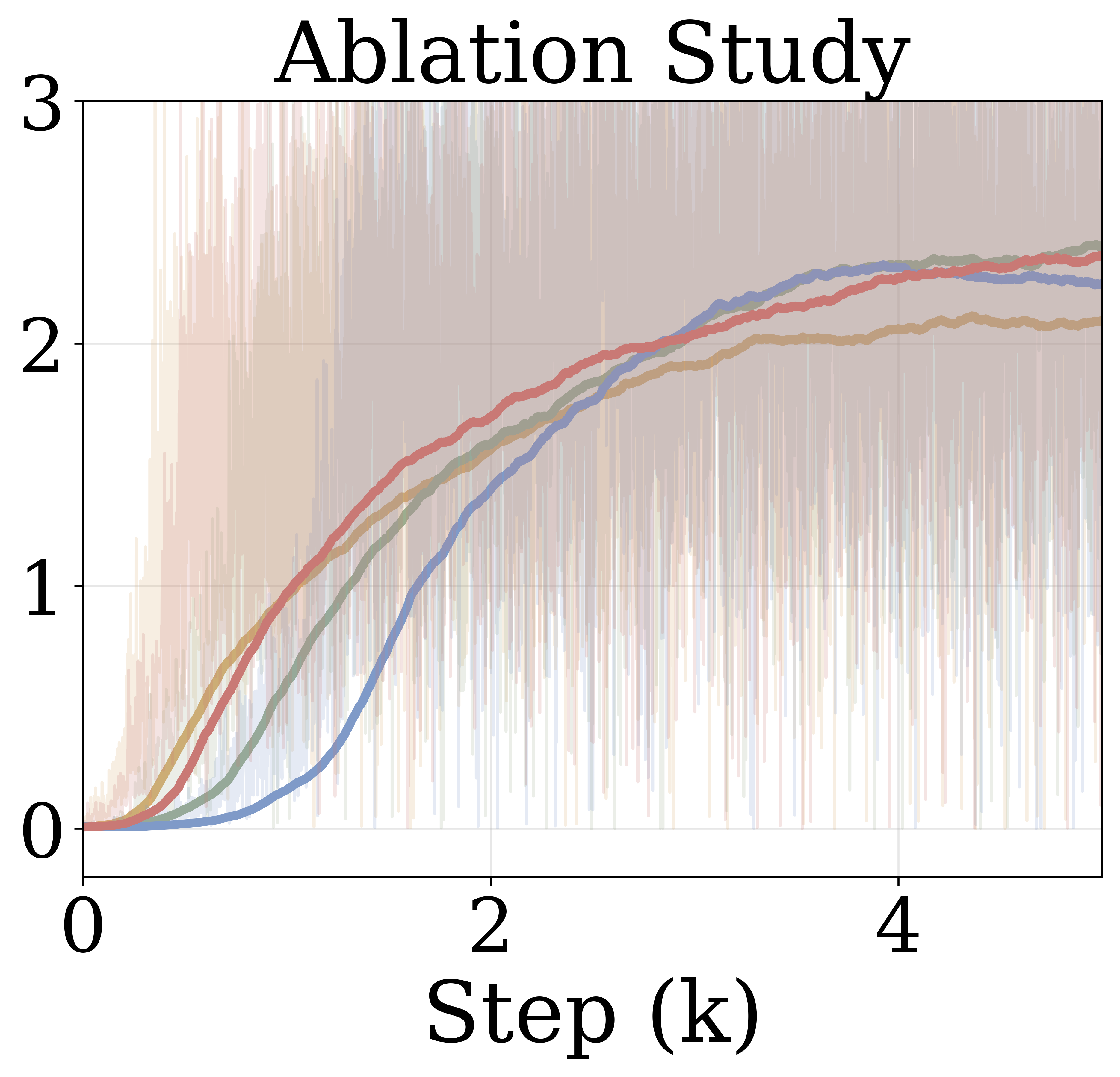}
    \hspace{0.02\linewidth}
    \raisebox{0.52cm}{%
        \includegraphics[width=0.4\linewidth,valign=b]{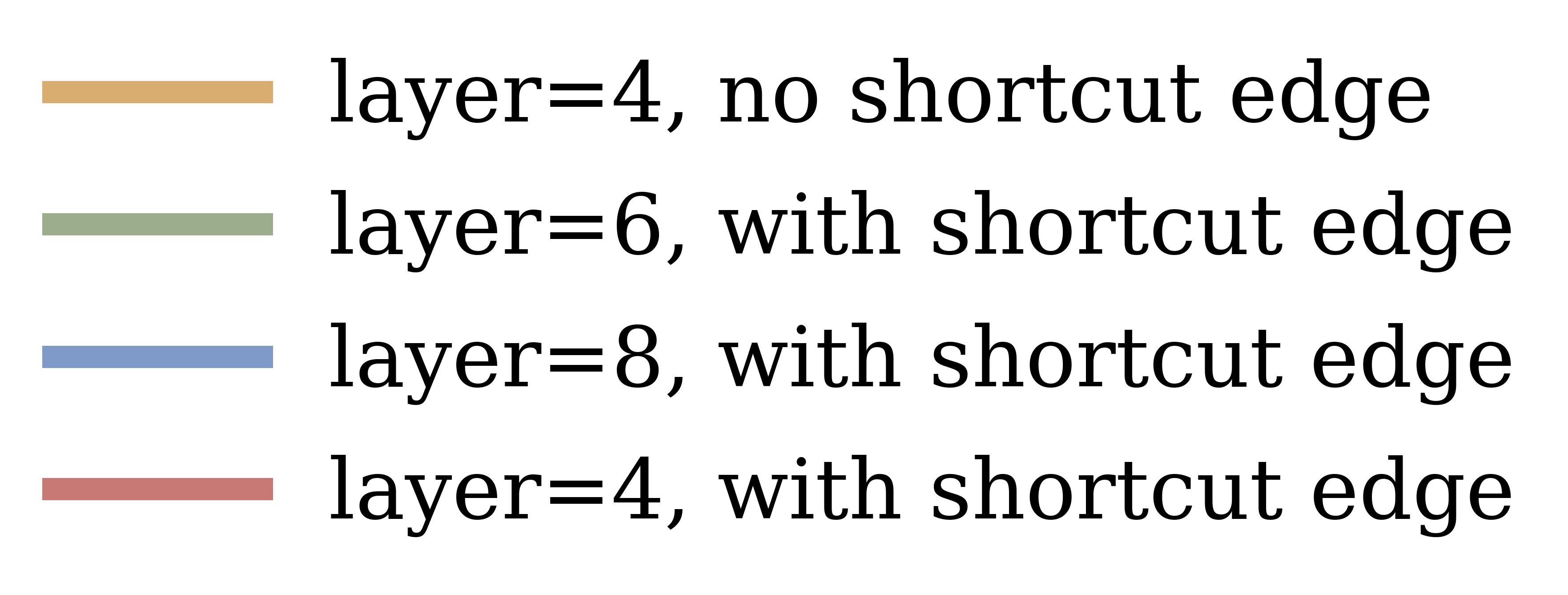}
    }
    \caption{Training efficiency of four variants of the proposed \method{}.}
    \label{fig:ablation-training-efficiency}
\end{figure}

\end{appendices}

\end{document}